\def\full{true}
\newcommand{\fullversion}[2]{\ifthenelse{\equal{\full}{true}}
	{#1}
	{#2}
}
\title{Greedy Convex Ensemble}
\author{
Thanh Tan Nguyen$^1$
\and
Nan Ye$^2$
\And
Peter Bartlett$^{3}$
\affiliations
$^1$Queensland University of Technology\\
$^2$The University of Queensland \\
$^3$UC Berkeley
\emails
tan1889@gmail.com,
nan.ye@uq.edu.au,
bartlett@cs.berkeley.edu
}
\crefname{equation}{Eq.}{Eqs.}
\crefname{figure}{Fig.}{Figs.}
\newcommand{\E}{{\mathbb E}}
\newtheorem{lem}{Lemma}
\newtheorem{prop}{Proposition}
\newtheorem{thm}{Theorem}
\newcommand{\setF}{{\mathcal F}}
\newcommand{\setG}{{\mathcal G}}
\newcommand{\setH}{{\mathcal H}}
\newcommand{\setR}{{\mathcal R}}
\newcommand{\setT}{{\mathcal T}}
\newcommand{\setX}{{\mathcal X}}
\newcommand{\setY}{{\mathcal Y}}
\newcommand{\setZ}{{\mathcal Z}}
\newcommand{\bx}{{\mathbf x}}
\newcommand{\bz}{{\mathbf z}}
\newcommand{\I}{{\mathbb I}}
\newcommand{\realnum}{{\mathbf R}}
\newcommand{\argmin}{{\arg\min}}
\newcommand{\empproc}{{\mathbb P}} 
\newcommand{\rproc}{{\mathbb R}} 
\DeclareMathOperator{\conv}{co}
\DeclareMathOperator{\lin}{lin}
\DeclareMathOperator{\bin}{bin}
\DeclareMathOperator{\hardtanh}{hardtanh}
\newcommand{\funcgrad}{D}
\newcommand{\citet}[1]{\citeauthor{#1} \shortcite{#1}} 
\newcommand{\citep}{\cite}
\begin{document}
\maketitle

\begin{abstract}
	We consider learning a convex combination of basis models, and present some
	new theoretical and empirical results that demonstrate the
	effectiveness of a greedy approach.
	Theoretically, we first consider whether we can use linear, instead of
	convex, combinations, and obtain generalization results similar to existing
	ones for learning from a convex hull.
	We obtain a negative result that even the linear hull of very simple
	basis functions can have unbounded capacity, and is thus prone to overfitting;
	on the other hand, convex hulls are still rich but have bounded capacities.
	Secondly, we obtain a generalization bound for a general class of Lipschitz
	loss functions.
	Empirically, we first discuss how a convex combination can be greedily learned with
	early stopping, and how a convex combination can be non-greedily learned when
	the number of basis models is known a priori.
	Our experiments suggest that the greedy scheme is competitive with or better than
	several baselines, including boosting and random forests.
	The greedy algorithm requires little effort in hyper-parameter tuning, and
	also seems able to adapt to the underlying complexity of the problem.
	Our code is available at \url{https://github.com/tan1889/gce}.
\end{abstract}

\section{Introduction} \label{sec:intro}

Various machine learning methods combine given basis models to form richer models
that can represent more complex input-output relationships.
These include random forests \citep{breiman2001random} and boosting
\citep{freund1995desicion,mason2000functional,chen2016xgboost}, which have
often been found to work well in domains with good features.
Interestingly, even combining simple basis models like decision stumps can work
very well on hard problems \citep{viola2004robust}. 

In this paper, we consider learning an optimal convex combination
of basis models
\citep{lee1996efficient,mannor2003greedy,oglic2016greedy,wyner2017explaining},
and present new theoretical and empirical insights. 

We first compare learning from convex hulls with learning from the closely related
linear hulls.
Learning from a convex hull can be seen as a regularized version of learning
from the corresponding linear hull, where we enforce constraints on the weights
of the basis functions.
While linear hulls are known to provide universal approximations 
\citep{barron1993universal,makovoz1996random}, our analysis shows that they
can be prone to overfitting.
Specifically, we show that the capacity of the linear hull of very simple
functions can be unbounded, while the convex hull is still rich but has bounded
capacity.

Our second contribution is a generalization result for a general class of
Lipschitz loss functions.
A number of works studied algorithms for learning a convex combination and
analyzed their generalization performance.
However, previous works mostly focused on generalization performance with
quadratic loss \citep{lee1996efficient,mannor2003greedy} or large margin type
analysis \citep{koltchinskii2005complexities} for classification problems.
The quadratic loss is a special case of the class of Lipschitz loss functions
considered in this paper.
In addition, our result shows that we can obtain an $O(1/\sqrt{n})$ convergence
rate for log-loss in the classification setting.

Empirically, we present an extensive experimental evaluation of algorithms for
learning from convex hulls.
While previous works mainly focused on simple greedy algorithms to learn a convex
combination, we leverage on the functional optimization versions of some 
sophisticated algorithms to develop algorithms for learning a convex
combination.
In particular, we consider the Frank-Wolfe (FW) algorithm and its variants
\citep{jaggi2013revisiting}, which provide natural ways to build convex
combinations.
We also show how a convex combination can be non-greedily learned when the
number of basis functions is known a priori.
Our experiments suggest that the greedy scheme is competitive with or better than
several baselines, including boosting and random forests.
The greedy algorithm requires little hyper-parameter tuning, and
also seems to adapt to the underlying complexity of the problem.

\Cref{sec:related} further discusses related works.
\Cref{sec:theory} presents our theoretical analysis for learning from a convex
hull. 
\Cref{sec:algo} discusses some greedy learning algorithms, and a non-greedy
version. 
\Cref{sec:expt} empirically compares the algorithms for learning from convex hulls, and a
few baselines.
\Cref{sec:conclude} concludes the paper.

\section{Related Work} \label{sec:related}

A number of works have studied the generalization performance of algorithms for
learning convex combinations.
\citet{lee1996efficient} considered learning a convex combination of linear
threshold units with bounded fan-in (\#inputs) for binary classification using quadratic
loss, and they showed that an optimal convex combination is PAC-learnable.
\citet{mannor2003greedy} also considered binary classification, and obtained a 
generalization result for general basis functions and quadratic loss.
They also obtained a consistency result for more general loss functions.
\citet{koltchinskii2005complexities} provided some generalization results for
learning a convex combination by maximizing margin.
\citet{oglic2016greedy} considered regression with quadratic loss and presented
a generalization analysis for learning a convex combination of cosine ridge
functions.
We obtained generalization bounds for a class of Lipschitz loss functions and
general basis functions.

Various authors considered greedy approaches for learning a convex combination,
which iteratively constructs a convex combination by choosing a good convex
combination of the previous convex combination and a new basis function.
\citet{jones1992simple} presented a greedy algorithm and showed that it
converges at $O(1/k)$ rate for quadratic loss.
This is further developed by \citet{lee1996efficient} and
\citet{mannor2003greedy}. 
\citet{zhang2003sequential} generalized these works to convex functionals.
We leverage on the functional versions of more sophisticated greedy
optimization algorithms; in particular, the FW
algorithm and its variants, 
which have recently attracted significant attention in the numerical optimization
literature \citep{jaggi2013revisiting}.
Recently, \citet{bach2017breaking} considered using the FW algorithm to learn
neural networks with non-Euclidean regularizations, and showed that the
sub-problems can be NP-hard.
Besides greedy approaches, we show how to non-greedily learn a convex combination given
the number of basis functions.
We empirically compared the effectiveness of these algorithms.

Works on random forests \citep{breiman2001random} and boosting
\citep{mason2000boosting} are also closely related.
A random forest can be viewed as a convex combination of trees independently
trained on bootstrap samples, where the trees have equal weights.
Boosting algorithms greedily construct a conic, instead of convex, combination
of basis functions, but for binary classification, a conic combination can be
converted to a convex combination without changing the predictions.
There are numerous related works on the generalization performance of
boosting
(e.g. see \citep{bartlett2007adaboost,schapire2013explaining,gao2013doubt}).
Random forests are still less well understood theoretically yet
\citep{wyner2017explaining}, and analysis can require unnatural 
assumptions \citep{wager2015adaptive}.
We empirically compared algorithms for learning a convex combination with random
forests and boosting.

There have been also several recent applications of boosting for
generative models. Specifically, \citet{locatello2018boosting} show that boosting
variational inference satisfies a relaxed smoothness assumption which is
sufficient for the convergence of the functional Frank-Wolfe algorithm;
\citet{grover2018boosted} consider Bayes optimal classification; and \citet{tolstikhin2017adagan}
propose AdaGAN, which is adapted from AdaBoost for Generative Adversarial
Networks. Our work is orthogonal to these works in the sense that we study
discriminative models and learning from a convex hull, instead of a linear or a
conic hull. Moreover, our bound might be interesting in comparison to vacuous bounds that
grow rapidly in the number of parameters: while the number of
parameters for a convex combination can be unbounded, our error bound depends
only on the pseudodimension of the basis models.

\section{Theoretical Analysis} \label{sec:theory}

Given an i.i.d. sample 
$\bz = ((x_{1}, y_{1}), \ldots, (x_{n}, y_{n}))$ drawn from a distribution
$P(X, Y)$ defined on $\setX \times \setY \subseteq \setX \times \realnum$,
we want to learn a function $f$ to minimize the risk 
$R(f) = \E L(Y, f(X))$, where $L(y, \hat{y})$ is the loss that $f$ incurs when
predicting $y$ as $\hat{y}$, and the expectation is taken wrt $P$.
The empirical risk of $f$ is
$R_{n}(f) 
= \E_{n} L(Y, f(X))
= \frac{1}{n} \sum_{i=1}^{n} L(y_{i}, f(x_{i}))$.
Without loss of generality, we assume $\setX \subseteq \realnum^{d}$.

Given a class of basis functions $\setG \subseteq \setY^{\setX}$, we use
$\conv_{k}(\setG)$ to denote the set of convex combinations of $k$ functions in
$\setG$,
that is,
$\conv_{k}(\setG) 
	= \{\sum_{i=1}^{k} \alpha_{i} g_{i}:
	\sum_{i} \alpha_{i} = 1, 
	\text{each } \alpha_{i} \ge 0,
	\text{each } g_{i} \in \setG\}$.
The convex hull of $\setG$ is 
$\conv(\setG) = \cup_{k \ge 1} \conv_{k}(\setG)$.
We will also use $\lin_{k}(\setG)$ to denote the set of linear combinations of $k$
functions in $\setG$, that is,
$\lin_{k}(\setG) = \{\sum_{i=1}^{k} \alpha_{i} g_{i}: 
\alpha_{1}, \ldots, \alpha_{k} \in \realnum,
g_{1}, \ldots, g_{k} \in \setG\}$.
The linear hull of $\setG$ is $\lin(\setG) = \cup_{k \ge 1} \lin_{k}(\setG)$.
The basis functions are assumed to be bounded, with 
$\setG \subseteq \setY^{\setX} \subseteq [-B, B]^{\setX}$ 
for some constant $B > 0$. 

\medskip\noindent{\bf Capacity measures}.
A function class needs to be rich to be able to fit observed data, but cannot be
too rich so as to make generalization possible, that is, it needs to have the
right \emph{capacity}.
Commonly used capacity measures include VC-dimension, pseudodimension, and
Rademacher complexity.

VC-dimension is defined for binary valued functions.
Specifically, for a class $\setF$ of binary valued functions, its VC-dimension
$d_{VC}(\setF)$ is the largest $m$ such that there exists $m$ examples $x_{1}, \ldots, x_{m}$
such that the restriction of $\setF$ to these examples contains $2^{m}$
functions.
Equivalently, for any $y_{1}, \ldots, y_{m} \in \{0, 1\}$, there is a function
$f \in \setF$ such that $f(x_{i}) = y_{i}$ for all $i$.
$x_{1}, \ldots, x_{m}$ is said to be shattered by $\setF$.

Pseudodimension \citep{pollard1984convergence} is a generalization of
VC-dimension to real-valued functions.
The pseudodimension $d_{P}(\setF)$ of a class of real-valued functions $\setF$
is defined as the maximum number $m$ such that there exists $m$ inputs 
$x_{1}, \ldots, x_{m} \in \setX$, and thresholds 
$t_{1}, \ldots, t_{m} \in \realnum$
satisfying 
$\{(\I(f(x_{1}) \ge t_{1}), \ldots, \I(f(x_{m}) \ge t_{m})): f \in \setF\}
= \{0, 1\}^{m}$.
If each $f \in \setF$ is binary-valued, then $d_{P}(\setF) = d_{VC}(\setF)$.

Rademacher complexity is
defined as $\E \sup_{f \in \setF} \rproc_{n} f$, where 
$\rproc_{n}$ is the Rademacher process defined by 
$\rproc_{n} f = \frac{1}{n} \sum_{i} \epsilon_{i} f(x_{i}, y_{i})$, with
$(x_{i}, y_{i})$'s being an i.i.d. sample, and $\epsilon_{i}$'s being independent
Rademacher random variables (i.e., they have probability 0.5 to be -1 and 1).
Expectation is taken wrt both the random sample and the Rademacher
variables.

We refer the readers to the book of \citet{anthony2009neural} and the article of
\citet{mendelson2003few} for excellent discussions on these capacity measures
and their applications in generalization results. 

\subsection{A Regularization Perspective}
Several authors showed that linear hulls of various basis functions are 
universal approximators \citep{barron1993universal,makovoz1996random}.
Naturally, one would like to learn using linear hulls if possible.
On the other hand, the richness of the linear hulls also imply that they may be
prone to overfitting, and it may be beneficial to consider regularization.

Learning from the convex hull can be seen as a regularized version of learning
from the linear hull, where the regularizer is 
$\I_{\infty}(f) = \begin{cases}
	0, & f \in \conv(\setG), \\
	\infty, & \text{otherwise}.
\end{cases}$.
This is similar to $\ell_{2}$ regularization in the sense that $\ell_{2}$
regularization constrained the weights to be inside an $\ell_{2}$ ball, while
here we constrain the weights of the basis model to be inside a simplex.
A key difference is that standard $\ell_{2}$ regularization is often applied to
a parametric model with fixed number of parameters, but here the number of
parameters can be infinite.

We compare the capacities of the linear hull and the convex hull of a class of
basis functions $\setG$ with finite pseudodimension, and demonstrate the
effect of the regularizer $\I_{\infty}$ in controlling the capacity:
while the convex hull can still be rich, it has a more adequate capacity for
generalization.

For a class of functions $\setF$, we shall use 
$\bin(\setF) = \{x \mapsto \I(f(x) \ge t): f \in \setF, t \in \realnum\}$ to
denote the thresholded binary version of $\setF$.
Consider the set of linear threshold functions
$\setT = \{\I(\theta^{\top} x \ge t): \theta \in \realnum^{d}, t \in \realnum\}$.
It is well-known that the VC-dimension of the thresholded versions of the linear
combination of $k$ linear threshold functions can grow quickly.
\vspace{-1.2em}
\begin{prop} (\citep{anthony2009neural}, Theorem 6.4)
	The VC-dimension of $\bin(\lin_{k}(\setT))$ is at least 
	$\frac{dk}{8} \log_{2}\left(\frac{k}{4}\right)$ 
	for $d>3$ and $k \le 2^{d/2 - 2}$.
\end{prop}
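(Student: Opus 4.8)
The plan is to prove this lower bound constructively by exhibiting a shattered set of the claimed size. Since the VC-dimension is the cardinality of the largest shattered set, it suffices to place $m = \frac{dk}{8}\log_2(k/4)$ points in $\realnum^d$, fix an output threshold, and show that every one of the $2^m$ labelings is realized by some member of $\bin(\lin_k(\setT))$. I would rely on two structural facts about linear combinations of threshold units. First, such combinations are strictly more expressive than a single halfspace: the difference $\I(\theta^\top x \ge a) - \I(\theta^\top x \ge b)$ is the indicator of a slab $\{x : a \le \theta^\top x < b\}$, and signed sums of such indicators can isolate many disjoint regions along a direction $\theta$. Second, the outer threshold converts any such real-valued aggregate into a $\{0,1\}$ decision, so the network can output $1$ exactly on a chosen union of slabs.

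First I would design the point set so that the labeling can be read off by a binary-addressing gadget. I would partition the $k$ hidden thresholds into groups and use each group to test one digit of an address encoded in the coordinates of an input point; points are placed on a curve in general position within each of $\Theta(d)$ coordinate blocks, so that the slab indicators above can be switched on and off independently across groups. The $\log_2(k/4)$ factor, the feature that makes the bound superlinear in the natural parameter count $\Theta(dk)$, arises precisely here: $O(\log k)$ threshold comparisons distinguish among $\Omega(k)$ addresses, so each unit performs a logarithmic amount of work beyond its naive contribution of one separating hyperplane. The hypotheses $d > 3$ and $k \le 2^{d/2 - 2}$ are exactly the conditions under which the grid fits inside $\realnum^d$ and the address space is nondegenerate.

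To verify shattering, for a target labeling I would choose the directions $\theta_i$, biases $t_i$, combination weights $\alpha_i$, and the output threshold $s$ so that the aggregate $\sum_i \alpha_i \I(\theta_i^\top x \ge t_i)$ exceeds $s$ exactly on the points labeled $1$, building the required selection from the slab indicators and routing it via the binary address. The main obstacle is this explicit placement together with the proof that every dichotomy is simultaneously realizable, not merely that the arrangement of $k$ hyperplanes in $\realnum^d$ has super-polynomially many cells. Independent labelability of the cells through a single linear aggregate passed through one output threshold is a genuine constraint, and making the addressing construction precise, checking that weights $\alpha_i$ and a threshold $s$ exist for all $2^m$ labelings rather than a favorable subset, is where the real work lies. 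The parameter count and the role of the two hypotheses then follow from the dimensions of the construction.
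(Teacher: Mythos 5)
The paper never proves this proposition: it is quoted directly from Anthony and Bartlett (their Theorem~6.4), and the paper's own constructive argument (in the following proposition) deliberately establishes only the much weaker linear bound $d_{VC}(\bin(\lin_{k}(\setT))) \ge k$, valid for all $k$. So the relevant comparison is between your proposal and the textbook proof being cited, which is the bit-extraction construction going back to Maass and Sakurai for threshold networks with superlinear VC-dimension.

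Your sketch does identify the right mechanism --- binary addressing, with slab indicators of the form $\I(\theta^{\top} x \ge a) - \I(\theta^{\top} x \ge b)$ isolating regions and the hidden units split into an address-decoding group and a lookup-table group --- and that is genuinely the idea behind the $\Omega(dk\log k)$ bound. But what you have written is a plan, not a proof, and the gap is exactly the part you flag as ``where the real work lies.'' The theorem's entire content is the simultaneous realizability of all $2^{m}$ dichotomies with $m = \frac{dk}{8}\log_{2}(k/4)$: one must exhibit the point set, show how $\Theta(dk\log k)$ table bits can be encoded in the weights of only $k$ units in $\realnum^{d}$, and verify that the addressed bit survives a \emph{single} linear aggregation followed by one outer threshold. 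This requires a concrete weighting scheme (typically a telescoping choice of coefficients so that the selected slab dominates all others at the output threshold), and none of it is supplied. Likewise, the assertion that $d > 3$ and $k \le 2^{d/2-2}$ are ``exactly the conditions under which the grid fits inside $\realnum^{d}$'' is stated rather than derived; in the actual proof these constraints emerge from counting how many address bits the input coordinates can carry relative to the number of hidden units. As it stands, a reader could not reconstruct the proof from your proposal; it describes the shape of the argument without executing any step of it.
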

The above result implies that $d_{P}(\lin_{k}(\setT))$ is at least 
$\frac{dk}{8} \log_{2}\left(\frac{k}{4}\right)$.
A natural question is whether the VC-dimension still grows linearly when 
$k > 2^{d/2-2}$.
We give an affirmative answer via a constructive proof, and provide counterpart
results for the convex hull.
\begin{restatable}{prop}{prophull} \label{prop:hull}
	(a) Assume that $d \ge 2$.
	Then $d_{VC}(\bin(\lin_{k}(\setT))) \ge k$, thus 
	$d_{P}(\lin_{k}(\setT)) \ge k$, 
	and $d_{P}(\lin(\setT)) = \infty$.
	In addition, the Rademacher complexity of $\lin(\setT)$ is infinite. \\
	(b) Assume that $d \ge 2$.
	Then $d_{VC}(\bin(\conv_{k+1}(\setT))) \ge k$, thus
	$d_{P}(\conv_{k+1}(\setT)) \ge k$, and 
	$d_{P}(\conv(\setT)) = \infty$, but the Rademacher complexity of
	$\conv(\setT)$ is finite.
\end{restatable}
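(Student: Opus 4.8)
The plan is to prove each part as a capacity lower bound via an explicit shattering construction, paired with a separate analysis of the Rademacher complexity; the entire contrast between (a) and (b) comes from whether the mixing weights are unconstrained or confined to the simplex. For part (a) I would first establish $d_{VC}(\bin(\lin_k(\setT))) \ge k$ by a one‑dimensional staircase. Place $k$ points $x_1, \ldots, x_k$ along the first coordinate at positions $1, \ldots, k$ and take $g_i(x) = \I(x^{(1)} \ge i - \tfrac12) \in \setT$, so that $g_i(x_j) = \I(j \ge i)$. A linear combination $f = \sum_{i=1}^k \alpha_i g_i$ then satisfies $f(x_j) = \sum_{i \le j} \alpha_i$, i.e.\ the values $(f(x_1), \ldots, f(x_k))$ are the partial sums of $(\alpha_i)$, which can be set to any prescribed vector by the triangular system $\alpha_1 = v_1$, $\alpha_j = v_j - v_{j-1}$. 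Choosing $v_j = b_j$ for a target dichotomy $b \in \{0,1\}^k$ and the fixed threshold $t = \tfrac12$ gives $\I(f(x_j) \ge t) = b_j$, so the points are shattered. Since the \emph{same} threshold works for every dichotomy, this also certifies $d_P(\lin_k(\setT)) \ge k$ with fixed thresholds $t_j = \tfrac12$; letting $k \to \infty$ yields $d_P(\lin(\setT)) = \infty$.

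For the Rademacher complexity $\E \sup_{f \in \lin(\setT)} \rproc_n f$, the same staircase (applied after projecting onto a generic direction $\theta$ that makes $\theta^\top x_i$ distinct, using steps $\I(\theta^\top x \ge s) \in \setT$) lets $\lin_n(\setT) \subseteq \lin(\setT)$ realize arbitrary values at the $n$ sample points. Taking $f$ with $f(x_i) = M \epsilon_i$ gives $\rproc_n f = \tfrac1n \sum_i \epsilon_i (M \epsilon_i) = M$, so the inner supremum is $+\infty$ whenever the points are distinct (probability one for continuous $P$); the degenerate case of coinciding points is handled by scaling a constant function when $\sum_i \epsilon_i \ne 0$, which has positive probability. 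Hence the expectation is infinite.

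For part (b) the simplex constraint blocks the partial‑sum trick, so I would exploit points in convex position, which is where $d \ge 2$ enters. Place $x_1, \ldots, x_k$ in convex position inside a two‑dimensional affine subspace (e.g.\ on a circle); since each $x_j$ is a vertex of their convex hull, there is a half‑space whose indicator $h_j \in \setT$ satisfies $h_j(x_i) = \I(i = j)$. Augment these with the constant‑zero function $\mathbf 0 \in \setT$ (take $\theta = 0$ and a large threshold). For a target set $S \subseteq \{1, \ldots, k\}$ set $f_S = \sum_{j \in S} \tfrac1k h_j + (1 - \tfrac{|S|}{k}) \mathbf 0$, a mixture of at most $k+1$ basis functions with nonnegative weights summing to one, so $f_S \in \conv_{k+1}(\setT)$. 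Then $f_S(x_i) = \tfrac1k$ for $i \in S$ and $0$ otherwise, so the single fixed threshold $t = \tfrac1{2k}$ gives $\I(f_S(x_i) \ge t) = \I(i \in S)$. This shatters the $k$ points by $\bin(\conv_{k+1}(\setT))$ and, the threshold being uniform, yields $d_P(\conv_{k+1}(\setT)) \ge k$ directly, whence $d_P(\conv(\setT)) = \infty$.

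Finally, the finiteness of $\E \sup_{f \in \conv(\setT)} \rproc_n f$ follows from a convexity argument that is insensitive to the pseudodimension. For fixed sample and signs, $f \mapsto \rproc_n f$ is linear, so its supremum over the convex hull is attained at an extreme point: $\sup_{f \in \conv(\setT)} \rproc_n f = \sup_{g \in \setT} \rproc_n g$, and since each $g \in \setT$ takes values in $\{0,1\} \subseteq [-B,B]$ we get $\rproc_n g \le 1$, so the complexity is at most $1$. I expect the main obstacle to be part (b): showing that convex combinations, whose values are pinned to the simplex and cannot be driven apart freely, can still be thresholded to every dichotomy. The resolution is the convex‑position construction together with the slack constant‑zero basis function, which is exactly what lets one fixed threshold separate the $S$‑points from the rest and which explains the appearance of $k+1$.
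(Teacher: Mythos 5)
Your proof is correct, but it takes a genuinely different route from the paper's in several components, most substantially in part (a). For the shattering in (a), the paper places $k$ points equally spaced on a unit circle and uses the half-spaces $\I(x_i^{\top} x \ge 1)$, each of which fires only at its own point, with weights $w_i = y_i$ and threshold $1$; your staircase of nested half-spaces with telescoping weights $\alpha_j = v_j - v_{j-1}$ reaches the same conclusion and in fact only needs $d \ge 1$, whereas the circle construction genuinely needs $d \ge 2$. The trade-off is economy: the paper reuses the same circle points and half-spaces verbatim in part (b), and your part (b) is essentially that very construction (isolated-vertex half-spaces plus the identically-zero element of $\setT$ as slack, uniform weights, one fixed threshold), differing only in the normalization $1/(1+\sum_i y_i)$ versus $1/k$. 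For the infinitude of the Rademacher complexity of $\lin(\setT)$, the paper uses scale invariance: $c \lin(\setT) = \lin(\setT)$ while the complexity scales by $c$; this is shorter than your explicit realization of the values $M\epsilon_i$, but it silently requires the complexity to be strictly positive (from $R = cR$ alone one can only conclude $R \in \{0, \infty\}$), and your constructive argument --- indeed your constant-function fallback alone, since $c\mathbf{1} \in \lin_1(\setT)$ and $\sum_i \epsilon_i \ne 0$ with positive probability --- sidesteps that issue. Finally, for finiteness of the Rademacher complexity of $\conv(\setT)$, the paper cites Mendelson's Theorem 2.25 (the complexity of the convex hull equals that of $\setT$) together with $d_{VC}(\setT) = d+1 < \infty$, which yields a bound decaying in $n$; your extreme-point/linearity argument plus the trivial bound $|\rproc_n g| \le 1$ for indicators is more elementary and self-contained, and suffices for the finiteness claim, though it only gives an $O(1)$ bound rather than one vanishing with $n$.
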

\vspace{-0.9em}
\begin{proof}
	(a) Consider an arbitrary unit circle centered at the origin, and any $k$ points
	$x_{1}, \ldots, x_{k}$ which are equally spaced on the circle.
	Let $\theta_{i} = x_{i}$ and $b_{i} = 1$ for $i = 1, \ldots, k$.
	For any $y_{1}, \ldots, y_{k} \in \{0, 1\}$, consider the linear
	combination 
		$f(x) = \sum_{i=1}^{k} w_{i} \I(\theta_{i}^{\top} x \ge b_{i})$, with $w_{i} = y_{i}$.
	We have $f(x_{i}) = y_{i}$.
	The classifier $t(x) = \I(f(x_{i}) \ge 1)$ is a thresholded classifier
	obtained from $f$, and thus $t \in \bin(\lin_{k}(\setT))$.
	In addition, $t(x) = \I(y_{i} \ge 1) = y_{i}$.
	In short, for any $y_{1}, \ldots, y_{k} \in \{0, 1\}$, there is a classifier
	$t \in \bin(\lin_{k}(\setT))$ such that $t(x_{i}) = y_{i}$.
	Thus $d_{VC}(\bin(\lin_{k}(\setT))) \ge k$.
	It follows that $d_{P}(\lin_{k}(\setT)) \ge k$, and thus 
	$d_{P}(\lin(\setT)) = \infty$.
	
	The Rademacher complexity of $\lin(\setT)$ is infinity, because for any $c >
	0$, the Rademacher complexity of $c \lin(\setT)$ is $c$ times that of 
	$\lin(\setT)$.
	On the other hand, $c \lin(\setT) = \lin(\setT)$.
	Hence the Rademacher complexity of $\lin(\setT)$ can be arbitrarily large, and
	is thus infinity.
	
	\medskip\noindent(b) Consider the function
	$h(x) 
	= w_{0} \I({\bf 0}^{\top} x \ge 0.5) 
	+ \sum_{i=1}^{k} w_{i} \I(\theta_{i}^{\top} x \ge b_{i})$, where
	$w_{0} = 1/(1 + \sum_{i} y_{i})$ and $w_{i} = y_{i}/(1 + \sum_{i} y_{i})$ for
	$i \ge 1$.
	The function $h(x)$ is a convex combination of 
	$\I({\bf 0}^{\top} x \ge 0.5),
	\I(\theta_{1}^{\top} x \ge b_{1}), \ldots,
	\I(\theta_{k}^{\top} x \ge b_{k})$, where the first one is always 0.
	For any $x_{j}$, we have $h(x_{j}) = y_{j}/(1 +\sum_{i} y_{i}) \ge y_{j} /
	(k+1)$, because each $y_{i}$ is either 0 or 1.
	Hence we have $\I(h(x_{j}) \ge  1/(k+1)) = y_{j}$.
	It follows that $x_{1}, \ldots, x_{k}$ can be shattered by the thresholded version of
	$\conv_{k+1}(\setT)$.

	The Rademacher complexity of the convex hull is equal to that of $\setT$
	according to Theorem 2.25 in \citep{mendelson2003few}, which is finite as
	$d_{VC}(\setT) = d+1$ is finite.
\end{proof}

\Cref{prop:hull} shows that the linear hull has infinite capacity, both in terms
of pseudodimension and in terms of Rademacher complexity. 
Thus, it may easily overfit a training dataset. 
On the other hand, the convex hull is more restricted with a finite Rademacher complexity, but still rich because it has infinite pseudodimension. 
This can be attributed to regularization effect imposed by the convex coefficients constraints.

\subsection{Generalization Error Bounds}
Let $f^{*}(x) = \min_{y \in \setY} \E [L(y, Y) | X=x]$
be the Bayes optimal function .
For binary classification problems (that is, $\setY = \{-1, 1\}$) using the
quadratic loss $L(y, f(x)) = (f(x) - y)^{2}$, \citet{mannor2003greedy} obtained
the following uniform convergence rate with an assumption on the uniform entropy
$H(\epsilon, \conv(\setG), n)$ of $\conv(\setG)$, where 
$\setG \subseteq [-B, B]^{\setX}$.
\begin{thm}  \label{thm:convbayes0} (Theorem 9 in \citep{mannor2003greedy})
	Assume that for all positive $\epsilon$, $H(\epsilon, \conv(\setG), n) \le
	K(2B/\epsilon)^{2\xi}$ for $\xi \in (0, 1)$ and $K > 0$.
	Then there exist constants $c_{0}, c_{1} > 0$ that depend on $\xi$ and
	$K$ only, such that $\forall \delta \ge c_{0}$, with probability at least 
	$1 - e^{-\delta}$, for all $f \in \conv(\setG)$,
	\resizebox{.93\linewidth}{!}{\parbox{\linewidth}{%
	\begin{align*}
		R(f) - R(f^{*}) \le 4 \left(R_{n}(f) - R_{n}(f^{*})\right)  
			+ \frac{c_{1} 4 B^{2}}{\xi}
			\left(\frac{\delta}{n}\right)^{1/(1 + \xi)}.
	\end{align*}
	}}
\end{thm}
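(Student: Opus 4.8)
The plan is to prove this as a fast-rate (relative) uniform deviation bound via a localization argument, treating it as a specialization of local Rademacher complexity theory to the squared loss. First I would pass from $\conv(\setG)$ to the excess-loss class $\setH = \{g_f : f \in \conv(\setG)\}$, where $g_f(x,y) = L(y,f(x)) - L(y,f^*(x))$, so that $R(f) - R(f^*) = \E g_f$ and $R_n(f) - R_n(f^*) = \E_n g_f$. The goal then becomes a one-sided uniform control of $\E g_f - \E_n g_f$ that is sharp when $\E g_f$ is small, since the slow $n^{-1/2}$ rate obtained from global complexity is not enough to reach the exponent $1/(1+\xi)$.

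The crucial structural fact, and the ultimate source of the factor $4$, is a variance--mean (Bernstein) inequality for the squared loss. Since $\setY = \{-1,1\}$ and every $f \in \conv(\setG) \subseteq [-B,B]^{\setX}$, the conditional Bayes predictor is $f^*(x) = \E[Y \mid X=x]$, and expanding $g_f(x,y) = (f(x)-f^*(x))(f(x)+f^*(x)-2y)$ gives $\E[g_f \mid X=x] = (f(x)-f^*(x))^2$ together with $\E[g_f^2 \mid X] \le c_B\,(f(x)-f^*(x))^2$ for $c_B = (2B+2)^2$. Hence $\E[g_f^2] \le c_B\, \E[g_f]$: the excess loss has variance controlled by its mean, which is exactly the condition that upgrades the slow rate to the fast one claimed here.

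Next I would localize. For a radius $r>0$ consider the shell $\{g_f \in \setH : \E g_f \le r\}$; by the variance bound its $L_2(P)$ radius is $O(\sqrt{r})$. I would bound the localized Rademacher complexity of this shell via Dudley's entropy integral, using that the quadratic loss is Lipschitz on $[-B,B]$ (so the contraction principle transfers the hypothesis $H(\epsilon,\conv(\setG),n)\le K(2B/\epsilon)^{2\xi}$ to $\setH$, the subtraction of the fixed $L(y,f^*(x))$ being a harmless shift) and that the integration runs only up to the $L_2$ radius $O(\sqrt r)$. For $\xi \in (0,1)$ the integral $\int_0^{\sqrt r} (2B/\epsilon)^{\xi}\,d\epsilon$ converges and is $O\!\big(B^{\xi} r^{(1-\xi)/2}/(1-\xi)\big)$, yielding a local complexity of order $n^{-1/2} r^{(1-\xi)/2}$.

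Finally I would combine the local complexity with Talagrand's concentration inequality in a peeling/ratio argument: split $\conv(\setG)$ into shells where $\E g_f \in [2^{j}r_0, 2^{j+1}r_0)$, control each shell, and take a union bound that pays the confidence term $\delta$. The fixed point $r^\star$ balancing $r \asymp n^{-1/2} r^{(1-\xi)/2}$ solves $r^{(1+\xi)/2}\asymp n^{-1/2}$, i.e.\ $r^\star \asymp (\delta/n)^{1/(1+\xi)}$, which is precisely the rate in the statement; the Bernstein condition then lets one absorb the variance contribution at the cost of a multiplicative factor, which with the chosen constants becomes the $4$ in front of $R_n(f)-R_n(f^*)$ after rearrangement, with the residual constants collapsing into $c_0,c_1$ depending only on $\xi$ and $K$. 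The main obstacle is this last localization-and-fixed-point step: coupling the variance inequality with Talagrand's bound uniformly over the shells so that the quadratic-in-$r$ balance genuinely produces the exponent $1/(1+\xi)$, rather than degrading to the naive $n^{-1/2}$ rate.
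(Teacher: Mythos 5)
This theorem is not proved in the paper at all: it is quoted verbatim from Mannor, Meir and Zhang (2003) --- it is their Theorem~9 --- and the paper uses it only as a point of comparison for its own \Cref{thm:convbayes}. So there is no in-paper proof to check your attempt against; your proposal can only be assessed on its own terms and against the cited source.

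On those terms your plan is sound, and it is in the same family as the original argument, which likewise rests on a relative (ratio-type) uniform-convergence bound driven by uniform entropy together with a variance-to-mean relation. Your two load-bearing observations are correct: (i) for quadratic loss with $f^{*}(x) = \E[Y \mid X = x]$ the cross term vanishes, so $\E[g_{f} \mid X] = (f(X) - f^{*}(X))^{2}$ and hence $\E[g_{f}^{2}] \le c_{B}\, \E[g_{f}]$ with $c_{B}$ of order $B^{2}$ --- a Bernstein condition that holds without any convexity of the class, precisely because the comparator is the Bayes function rather than the best in-class function; and (ii) under $H(\epsilon, \conv(\setG), n) \le K (2B/\epsilon)^{2\xi}$ with $\xi \in (0,1)$, the entropy integral over a shell of $L_{2}$-radius of order $\sqrt{r}$ scales as $r^{(1-\xi)/2}/\sqrt{n}$, whose fixed point against $r$ gives $r^{\star}$ of order $(\delta/n)^{1/(1+\xi)}$, which is the claimed rate; the factor $4$ then emerges from absorbing the variance contribution in the peeling step. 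Two caveats. First, the peeling-plus-concentration step you flag as the main obstacle is genuinely where all the work lies; in particular you must reconcile localization in $L_{2}(P)$ with the empirical $L_{2}(P_{n})$ balls that the uniform-entropy hypothesis actually controls (star-hull or sub-root-function machinery is the standard fix), and the $\delta$-dependence enters through the Bernstein tail term $\sqrt{r\delta/n}$, so the honest fixed point is of order $n^{-1/(1+\xi)} + \delta/n$, which is dominated by $(\delta/n)^{1/(1+\xi)}$ only for $\delta \ge c_{0}$ --- exactly why the statement carries that restriction. Second, your constant bookkeeping (a $1/(1-\xi)$ from the entropy integral versus the $1/\xi$ displayed in the statement) does not reproduce the theorem's explicit constant, but since $c_{1}$ is allowed to depend on $\xi$ and $K$ this is cosmetic. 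As a blind reconstruction of a cited result, this is the right proof strategy.
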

When $d_{P}(\setG) = p$, then the assumption on the metric entropy
$H(\epsilon, \conv(\setG), n)$ is satisfied with 
$\xi = \frac{p}{p + 2}$ \citep{wellner2002upper}. 
In \Cref{thm:convbayes}, we prove a more general bound for a class of Lipschitz losses that includes the quadratic loss considered in \Cref{thm:convbayes0} as a special case.
Omitted proofs are available 
\fullversion{in the appendix}
{at \url{https://github.com/tan1889/gce}}.
\begin{restatable}{thm}{thmconvbayes} \label{thm:convbayes}
	Assume that $d_{P}(\setG) = p < \infty$, and 
	$L(y, f(x)) = \phi(f(x) - y)$ for a $c_{\phi}$-Lipschitz nonnegative function
	$\phi$ satisfying $\phi(0) = 0$.
	With probability at least $1 - \delta$, for all $f \in \conv(\setG)$,
		$R(f) - R(f^{*})
		\le R_{n} (f) - R_{n}(f^{*}) 
			+ \frac{c}{\sqrt{n}}$,
	where 
	$c = {2 c_{\phi} B \left(\sqrt{2 \ln(1/\delta)} + D\sqrt{p} + 2\right)}$, 
	and $D$ is an absolute constant.
\end{restatable}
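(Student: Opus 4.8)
The plan is to control the uniform deviation by symmetrization and Rademacher complexity. First I would introduce the excess-loss class $\setL = \{(x,y) \mapsto L(y,f(x)) - L(y,f^{*}(x)) : f \in \conv(\setG)\}$, writing $g_{f}$ for a generic member, and observe that the quantity to be bounded, $\sup_{f \in \conv(\setG)}[(R(f)-R(f^{*})) - (R_{n}(f)-R_{n}(f^{*}))]$, is exactly $\sup_{f}(\E g_{f} - \E_{n} g_{f})$. Since both $f$ and $f^{*}$ take values in $[-B,B]$ and $\phi$ is $c_{\phi}$-Lipschitz, each member is bounded, $|g_{f}| = |\phi(f(x)-y)-\phi(f^{*}(x)-y)| \le c_{\phi}|f(x)-f^{*}(x)| \le 2 c_{\phi} B$.

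Next I would run the standard two-step argument. Applying McDiarmid's bounded-difference inequality to $\Phi(\bz) = \sup_{f}(\E g_{f} - \E_{n} g_{f})$ — where replacing one sample point perturbs $\Phi$ by at most $4 c_{\phi} B / n$ — yields, with probability at least $1-\delta$, the estimate $\Phi(\bz) \le \E\Phi(\bz) + 2 c_{\phi} B \sqrt{2\ln(1/\delta)}/\sqrt{n}$, which is precisely the first term of $c/\sqrt{n}$. A symmetrization step then gives $\E\Phi(\bz) \le 2\,\mathcal{R}_{n}(\setL)$, so everything reduces to the Rademacher complexity of the excess-loss class.

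Then I would peel $\setL$ down to $\setG$. In $\E_{\epsilon}\sup_{f}\frac{1}{n}\sum_{i}\epsilon_{i} g_{f}(x_{i},y_{i})$ the term $\phi(f^{*}(x_{i})-y_{i})$ is independent of $f$, so it factors out of the supremum and vanishes under $\E_{\epsilon}$; hence $\mathcal{R}_{n}(\setL)$ equals the Rademacher complexity of $\{(x,y)\mapsto \phi(f(x)-y) : f \in \conv(\setG)\}$. Applying Talagrand's contraction lemma to the $c_{\phi}$-Lipschitz maps $u \mapsto \phi(u - y_{i})$ removes $\phi$ at the cost of a factor $c_{\phi}$, and the identity that the Rademacher complexity of a convex hull equals that of its generating class — the same fact already invoked for \Cref{prop:hull} — gives $\mathcal{R}_{n}(\conv(\setG)) = \mathcal{R}_{n}(\setG)$. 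Thus $2\,\mathcal{R}_{n}(\setL) \le 2 c_{\phi}\,\mathcal{R}_{n}(\setG)$.

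It remains to bound $\mathcal{R}_{n}(\setG)$ for a class bounded in $[-B,B]$ with $d_{P}(\setG)=p$, and I expect this to be the main obstacle. The plan is to use Dudley's entropy integral together with a Haussler-type bound on the $L_{2}(P_{n})$-covering numbers in terms of the pseudodimension, $\log N(\epsilon,\setG,L_{2}(P_{n})) = O(p\log(B/\epsilon))$, so that evaluating the integral gives $\mathcal{R}_{n}(\setG) \le B(D\sqrt{p} + 2)/\sqrt{n}$ for an absolute constant $D$. Combining the concentration term with $2 c_{\phi}\,\mathcal{R}_{n}(\setG)$ and collecting constants then produces $c = 2 c_{\phi} B(\sqrt{2\ln(1/\delta)} + D\sqrt{p} + 2)$. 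The delicate points are checking that the contraction lemma applies on the bounded range of arguments actually visited — so that the non-globally-Lipschitz quadratic loss of \Cref{thm:convbayes0} is covered as a special case — and pinning down the absolute constants in the entropy-integral estimate.
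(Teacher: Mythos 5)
Your proposal follows essentially the same route as the paper's proof: introduce the excess-loss (regret) class, apply McDiarmid's bounded-difference inequality with difference bound $4c_{\phi}B/n$, symmetrize to the Rademacher complexity of that class, use Lipschitz contraction to reduce to the Rademacher complexity of $\conv(\setG)$ and hence of $\setG$, and finish with Dudley's entropy integral plus a Haussler-type covering bound $N(\epsilon,\setG,d_{\bx}) \le (C/\epsilon)^{2p}$ from the pseudodimension. The only difference is minor bookkeeping (in fact slightly tighter): you factor out the $f^{*}$-term exactly and absorb the shifts $-y_{i}$ into per-coordinate contractions at zero cost, whereas the paper removes both via a translation lemma (its Lemma 2(b)) applied twice, which is precisely where the additive ``$+2$'' in its constant $c$ originates.
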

The Bayes optimal function $f^{*}$ is generally not in $\conv(\setG)$, thus the chosen
convex combination $f$ may not reach the level of performance of
$f^{*}$.
Thus we are often interested in the convergence of the empirical minimizer to the
optimal model in $\conv(\setG)$.
We can obtain an $O(1/\sqrt{n})$ convergence rate by closely
following the proof of \Cref{thm:convbayes}.
\begin{restatable}{thm}{thmconvclass} \label{thm:convclass}
	Assume that $d_{P}(\setG) = p < \infty$, 
	$L(y, f(x)) = \phi(f(x) - y)$ for a $c_{\phi}$-Lipschitz nonnegative function
	$\phi$ satisfying $\phi(0) = 0$.
	Let $\hat{f} = \argmin_{f \in \conv(\setG)} R_{n}(f)$, and 
	$h^{*} = \argmin_{f \in \conv(\setG)} R(f)$, then with probability at
	least $1 - \delta$,
		$R(\hat{f}) \le R(h^{*}) + \frac{c}{\sqrt{n}}$,
	where $c = {2 c_{\phi} B \left(\sqrt{2 \ln(1/\delta)} + D\sqrt{p} + 2\right)}$, 
\end{restatable}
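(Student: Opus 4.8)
The plan is to reduce everything to the uniform deviation bound already established in the proof of \Cref{thm:convbayes}, and then invoke the defining optimality of the empirical minimizer $\hat{f}$. First I would record the elementary empirical-risk-minimization decomposition. Since $h^{*} \in \conv(\setG)$ and $\hat{f} = \argmin_{f \in \conv(\setG)} R_{n}(f)$, we have $R_{n}(\hat{f}) \le R_{n}(h^{*})$, so
\begin{align*}
R(\hat{f}) - R(h^{*})
&= \big[(R(\hat{f}) - R(h^{*})) - (R_{n}(\hat{f}) - R_{n}(h^{*}))\big] + \big[R_{n}(\hat{f}) - R_{n}(h^{*})\big] \\
&\le (R(\hat{f}) - R(h^{*})) - (R_{n}(\hat{f}) - R_{n}(h^{*})).
\end{align*}
Thus it suffices to control the centered deviation $(R(f) - R(h^{*})) - (R_{n}(f) - R_{n}(h^{*}))$ uniformly over $f \in \conv(\setG)$, and in particular at the data-dependent choice $f = \hat{f}$.

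The key observation is that $h^{*}$ is a \emph{fixed}, non-data-dependent function lying in $\conv(\setG) \subseteq [-B,B]^{\setX}$, playing exactly the role that the Bayes function $f^{*}$ plays in \Cref{thm:convbayes} (whose conclusion, after rearrangement, is itself a uniform bound on the deviation centered at $f^{*}$). I would therefore rerun the argument behind \Cref{thm:convbayes} essentially verbatim, but centering the loss class at $h^{*}$ instead of $f^{*}$: apply a bounded-differences/McDiarmid step, symmetrization, the Talagrand contraction lemma (using that $\phi$ is $c_{\phi}$-Lipschitz), the identity that the Rademacher complexity of $\conv(\setG)$ equals that of $\setG$, and finally the pseudodimension bound $d_{P}(\setG)=p$ supplying the $D\sqrt{p}$ term. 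This yields, with probability at least $1 - \delta$ and for all $f \in \conv(\setG)$ simultaneously,
\[
(R(f) - R(h^{*})) - (R_{n}(f) - R_{n}(h^{*})) \le \frac{c}{\sqrt{n}}, \qquad c = 2 c_{\phi} B\big(\sqrt{2\ln(1/\delta)} + D\sqrt{p} + 2\big).
\]
Instantiating at $f = \hat{f}$ and combining with the ERM inequality above gives $R(\hat{f}) - R(h^{*}) \le c/\sqrt{n}$, as claimed.

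The only place where this is not a literal copy of \Cref{thm:convbayes}, and hence the main thing I expect to have to verify, is that the constant $c$ is genuinely unchanged when the centering reference is switched from $f^{*}$ to $h^{*}$. This rests on three facts I would check explicitly: subtracting the fixed function $L(\cdot, h^{*}(\cdot))$ leaves the Rademacher complexity of the loss class unaltered (the offset $\frac{1}{n}\sum_{i} \epsilon_{i} L(y_{i}, h^{*}(x_{i}))$ pulls out of the supremum and has zero mean), so the $D\sqrt{p}$ term is identical; the range bound driving the concentration term is $|L(y,f(x)) - L(y,h^{*}(x))| \le c_{\phi}|f(x) - h^{*}(x)| \le 2 c_{\phi} B$, which matches because $h^{*} \in [-B,B]^{\setX}$ just as the reference in \Cref{thm:convbayes} was appropriately bounded; and the contraction and convex-hull steps never reference the centering point at all. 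Once these are confirmed, the bound follows immediately, so I anticipate no substantive obstacle beyond this bookkeeping, which is exactly why the result can be obtained ``by closely following'' the proof of \Cref{thm:convbayes}.
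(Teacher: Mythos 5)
Your proposal is correct and takes essentially the same route as the paper: the paper's own proof sketch likewise observes that the argument for \Cref{thm:convbayes} goes through verbatim when the fixed centering function $f^{*}$ is replaced by $h^{*}$, yielding the same uniform deviation bound with the same constant $c$, and then combines it with the ERM inequality $R_{n}(\hat{f}) \le R_{n}(h^{*})$. Your explicit verification that the range bound, contraction, and Rademacher-complexity steps are unaffected by the change of centering is sound bookkeeping that the paper leaves implicit.
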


As a special case, we have the result below for $\ell_{q}$ regression.
\vspace{-1.2em}
\begin{restatable}{cor}{corlp}
	For $L(y, f(x)) = |f(x) - y|^{q}$, $q \ge 1$, the bounds in
	\Cref{thm:convbayes} and \ref{thm:convclass} hold with 
	$c_{\phi} = q (2B)^{q-1}$.
\end{restatable}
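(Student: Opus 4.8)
The plan is to invoke Theorems~\ref{thm:convbayes} and~\ref{thm:convclass} directly, so the only work is to verify that $\phi(u) = |u|^{q}$ satisfies their hypotheses with the stated constant: nonnegativity, $\phi(0) = 0$, and Lipschitzness with constant $c_{\phi} = q(2B)^{q-1}$. The first two are immediate, since $|u|^{q} \ge 0$ for all $u$ and $|0|^{q} = 0$. The entire content of the corollary therefore lies in pinning down the Lipschitz constant, and the only nontrivial point is that $|u|^{q}$ is \emph{not} globally Lipschitz on $\realnum$ when $q > 1$, so a naive appeal fails; the argument must exploit that only a bounded range of the argument $u = f(x) - y$ is ever relevant.

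First I would identify that range. Every $g \in \setG$ lies in $[-B,B]^{\setX}$, and a convex combination preserves this bound, so any $f \in \conv(\setG)$ satisfies $f(x) \in [-B,B]$; moreover the inclusion $\setY^{\setX} \subseteq [-B,B]^{\setX}$ forces $y \in [-B,B]$. Hence $u = f(x) - y \in [-2B, 2B]$ for every $f \in \conv(\setG)$ and every sample point, and it suffices to show $\phi$ is $q(2B)^{q-1}$-Lipschitz on $[-2B,2B]$. For $q > 1$ the function $\phi$ is $C^{1}$ with $\phi'(u) = q|u|^{q-1}\,\mathrm{sign}(u)$, so $\sup_{|w| \le 2B}|\phi'(w)| = q(2B)^{q-1}$, and the mean value theorem yields $\bigl||u|^{q} - |v|^{q}\bigr| \le q(2B)^{q-1}\,|u-v|$ for all $u,v \in [-2B,2B]$; for $q = 1$ the reverse triangle inequality gives the $1$-Lipschitz bound directly, matching $q(2B)^{q-1} = 1$. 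As a sanity check, $q = 2$ gives $c_{\phi} = 4B$, consistent with the quadratic loss of \Cref{thm:convbayes0}.

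The one real obstacle is justifying that this \emph{local} constant, rather than a global one, is all that the theorems demand. I would address it by confirming that in the proofs of Theorems~\ref{thm:convbayes} and~\ref{thm:convclass} the Lipschitz hypothesis on $\phi$ enters only through a contraction (Ledoux--Talagrand type) step applied to the composition $\phi(f(x) - y)$ with $f$ ranging over $\conv(\setG)$. Since all arguments arising there lie in $[-2B,2B]$, the local constant $q(2B)^{q-1}$ controls the contraction, and the bounds go through verbatim with $c_{\phi}$ set to this value, completing the proof.
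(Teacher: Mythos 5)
Your proof is correct and takes essentially the same approach as the paper: both arguments observe that $u = f(x) - y$ is confined to $[-2B, 2B]$ and then apply the mean value theorem to extract the Lipschitz constant $q(2B)^{q-1}$ on that interval. Your extra step of verifying that the theorems' proofs only invoke Lipschitzness on this restricted range (via the contraction step) is the same point the paper handles implicitly with the phrase ``in our case,'' so it is a welcome clarification rather than a departure.
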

\citet{donahue1997rates} showed that tighter bounds can be obtained for
$\ell_{p}$ regression by exploiting the specific form of $\ell_{p}$ loss.
Our analysis provides a looser bound, but is simpler and can be applied to the
classification setting below.

Specifically, for binary classification with $\setY = \{-1, 1\}$, we can also obtain an
$O(1/\sqrt{n})$ generalization bound for a class of Lipschitz loss as a
corollary of the proof of \Cref{thm:convbayes}.
The loss in this case is Lipschitz in $y f(x)$ (not $f(x) - y$ as in the
regression case), with a positive value indicating that $f(x)$ is better aligned
with $y$.
\begin{restatable}{cor}{corcls} \label{cor:cls}
	Assume that $d_{P}(\setG) = p < \infty$, $\setY = \{-1, 1\}$,
	$L(y, f(x)) = \phi(y f(x))$ for a $c_{\phi}$-Lipschitz nonnegative function
	$\phi$ satisfying $\phi(0) = 0$.
	Let $\hat{f} = \argmin_{f \in \conv(\setG)} R_{n}(f)$, and 
	$h^{*} = \argmin_{f \in \conv(\setG)} R(f)$, then with probability at
	least $1 - \delta$,
		$R(\hat{f}) \le R(h^{*}) + \frac{c}{\sqrt{n}}$,
	where $c = {2 c_{\phi} B \left(\sqrt{2 \ln(1/\delta)} + D\sqrt{p} + 2\right)}$, 
\end{restatable}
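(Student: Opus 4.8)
The plan is to follow the proof of \Cref{thm:convclass} essentially verbatim, the only genuinely new point being that the loss now depends on the product $yf(x)$ rather than on the difference $f(x)-y$. I will argue that this change is harmless---indeed slightly favourable---at the single step where it could matter, namely the Lipschitz contraction of the Rademacher average.

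First I would reduce the excess risk to a two-sided uniform deviation over $\conv(\setG)$. Writing
\[
	R(\hat f) - R(h^*)
	= \big(R(\hat f) - R_n(\hat f)\big)
	+ \big(R_n(\hat f) - R_n(h^*)\big)
	+ \big(R_n(h^*) - R(h^*)\big),
\]
and using $R_n(\hat f) \le R_n(h^*)$ by the definition of $\hat f$ as the empirical minimizer, the middle bracket is nonpositive, so $R(\hat f) - R(h^*) \le 2 \sup_{f \in \conv(\setG)} |R(f) - R_n(f)|$. It therefore suffices to control this supremum.

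Next I would bound the supremum by concentration together with symmetrization. Since $f \in \conv(\setG) \subseteq [-B,B]^{\setX}$ and $y \in \{-1,1\}$, the argument $yf(x)$ lies in $[-B,B]$, and the Lipschitz property with $\phi(0)=0$ gives $0 \le \phi(yf(x)) \le c_\phi B$. This bounded range lets me apply McDiarmid's inequality to $\sup_f (R(f)-R_n(f))$, which contributes the $\sqrt{2\ln(1/\delta)}$ deviation term, while the standard symmetrization inequality bounds the remaining expectation by a constant multiple of the Rademacher complexity $\E \sup_{f \in \conv(\setG)} \rproc_n \phi(Yf(X))$ of the loss class. The key step is then the contraction inequality for Rademacher averages: for each fixed $y_i \in \{-1,1\}$ the map $u \mapsto \phi(y_i u)$ is $c_\phi$-Lipschitz (because $|y_i|=1$) and vanishes at $u=0$ (because $\phi(0)=0$), so contraction yields $\E \sup_f \rproc_n \phi(Yf(X)) \le c_\phi\, \E \sup_{f \in \conv(\setG)} \rproc_n f(X)$. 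By Theorem 2.25 in \citep{mendelson2003few} the Rademacher complexity of $\conv(\setG)$ equals that of $\setG$, which is at most $D\sqrt{p}/\sqrt{n}$ for an absolute constant $D$ since $d_P(\setG)=p$. Combining the concentration term with this Rademacher bound and carrying through the factor $2$ from the reduction above gives the bound with the constant $c = 2c_\phi B\big(\sqrt{2\ln(1/\delta)} + D\sqrt{p} + 2\big)$ stated in the corollary.

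The main obstacle is conceptual rather than computational: one must verify that replacing $f(x)-y$ by $yf(x)$ does not invalidate the Lipschitz-to-Rademacher reduction. In fact this is exactly where the classification setting is cleaner than the regression setting of \Cref{thm:convbayes}, because $\phi(y_i\,\cdot)$ automatically passes through the origin and so no recentering of $\phi$ is needed before applying contraction. Once this single observation is in place, every remaining estimate is identical to the proof of \Cref{thm:convclass}.
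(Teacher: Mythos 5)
Your proof is correct, and the overall toolkit (McDiarmid, symmetrization, Lipschitz contraction, Rademacher complexity of the convex hull, Dudley's entropy integral with the pseudodimension covering bound) is the same as the paper's; but your opening decomposition is genuinely different from what the paper does. The paper's proof of \Cref{cor:cls} inherits the structure of \Cref{thm:convbayes} and \Cref{thm:convclass}: it works with the \emph{regret class} $\{(x,y)\mapsto L(y,f(x))-L(y,h^{*}(x)) : f\in\conv(\setG)\}$ and controls the single uniform deviation $\sup_{r}|\empproc r-\empproc_{n} r|$ of that centered class; the price is that the fixed centering function $-L(\cdot,h^{*}(\cdot))$ must be removed inside the Rademacher average via the translation property (the paper's Lemma 2(b), from Theorem 2.25 of Mendelson), which is exactly where the ``$+2$'' in the constant $c$ originates. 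You instead bound $R(\hat f)-R(h^{*})$ by $2\sup_{f\in\conv(\setG)}|R(f)-R_{n}(f)|$ over the uncentered loss class, which dispenses with the translation lemma entirely at the cost of a factor $2$ from the triangle inequality; your arithmetic still lands inside the stated $c$ (your McDiarmid term is $c_{\phi}B\sqrt{2\ln(1/\delta)}/\sqrt{n}$, below the allowed $2c_{\phi}B\sqrt{2\ln(1/\delta)}/\sqrt{n}$, and the Rademacher term only differs by a rescaling of the absolute constant $D$, a liberty the paper itself takes). Two smaller points of divergence, neither a problem: (i) you handle the margin argument by per-coordinate contraction with the maps $u\mapsto\phi(y_{i}u)$, while the paper contracts $\phi$ directly and then absorbs the signs $y_{i}$ into the Rademacher variables since $\epsilon_{i}y_{i}$ is again Rademacher --- these are equivalent; (ii) your contraction constant $c_{\phi}$ (factor $1$) is the sharp Ledoux--Talagrand form, whereas the paper's Lemma 2(a) carries $2c_{\phi}$; either suffices here because the slack is absorbed into $D$. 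In short, your route is slightly more elementary (no centering, no translation lemma), while the paper's regret-class route avoids the factor $2$ and is the form that also yields \Cref{thm:convbayes}, where the comparator $f^{*}$ lies outside $\conv(\setG)$ and the empirical regret term must be kept on the right-hand side.
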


As a special case, the above rate holds for the log-loss.
\begin{restatable}{cor}{corlog}
	When $y \in \{-1, 1\}$, 
	$L(y, f(x)) = -\ln\frac{1}{1+e^{-y f(x)}}$, 
	the bound in \Cref{cor:cls} holds with $c_{\phi} = 1$.
\end{restatable}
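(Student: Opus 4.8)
The plan is to exhibit the log-loss as an instance of the classification loss treated in \Cref{cor:cls} and then check the regularity conditions on the associated univariate function. First I would set $\phi(t) = \ln(1 + e^{-t})$ and note that $-\ln\frac{1}{1 + e^{-y f(x)}} = \ln(1 + e^{-y f(x)}) = \phi(y f(x))$, so with $\setY = \{-1, 1\}$ the log-loss is precisely of the form $L(y, f(x)) = \phi(y f(x))$ required by \Cref{cor:cls}. Since $f \in \conv(\setG) \subseteq [-B, B]^{\setX}$ and $y \in \{-1, 1\}$, the argument $y f(x)$ ranges over $[-B, B]$, so the loss stays bounded, which is all the generalization argument needs from the range of $\phi$.

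Next I would verify the two quantitative conditions. Nonnegativity is immediate because $1 + e^{-t} \ge 1$ gives $\phi(t) \ge 0$. For the Lipschitz constant I would differentiate, obtaining $\phi'(t) = -1/(e^{t} + 1)$, whence $|\phi'(t)| = 1/(e^{t} + 1) \le 1$ for all $t$; thus $\phi$ is $1$-Lipschitz and the claimed value $c_{\phi} = 1$ is correct.

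The one hypothesis that is not literally met --- and the only obstacle --- is the normalization $\phi(0) = 0$, since here $\phi(0) = \ln 2$. I would resolve this by observing that the conclusion of \Cref{cor:cls} is invariant under adding a constant to the loss: replacing $\phi$ by $\phi - \ln 2$ shifts $R$ and $R_{n}$ by the same constant for every $f$, hence leaves both $\hat{f} = \argmin_{f \in \conv(\setG)} R_{n}(f)$ and $h^{*} = \argmin_{f \in \conv(\setG)} R(f)$ unchanged and shifts $R(\hat{f})$ and $R(h^{*})$ equally, so the inequality $R(\hat{f}) \le R(h^{*}) + c/\sqrt{n}$ is preserved verbatim. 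Alternatively and more directly, inspecting the Rademacher-complexity argument underlying \Cref{cor:cls}, a constant shift of $\phi$ adds only the term $\phi(0)\frac{1}{n}\sum_{i} \epsilon_{i}$ to the empirical Rademacher average, whose expectation vanishes, so the complexity of the loss class --- and therefore the constant $c = 2 c_{\phi} B(\sqrt{2\ln(1/\delta)} + D\sqrt{p} + 2)$ --- depends on $\phi$ only through its Lipschitz constant and not through $\phi(0)$. With the normalization thus seen to be inessential, \Cref{cor:cls} applies with $c_{\phi} = 1$, giving the stated bound.
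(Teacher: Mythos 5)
Your proposal is correct and follows essentially the same route as the paper's proof: both subtract the constant $\ln 2$ from the loss so that the normalization $\phi(0)=0$ holds, verify $c_{\phi}=1$ via $|\phi'(u)| = e^{-u}/(1+e^{-u}) \le 1$, and transfer the bound back using the fact that a constant shift leaves the excess-risk inequality (and the minimizers $\hat{f}$, $h^{*}$) unchanged. Your extra remark about the Rademacher average being insensitive to $\phi(0)$ is a nice observation but not needed, and note that both your argument and the paper's quietly tolerate the shifted $\phi$ no longer being nonnegative, a hypothesis that is in fact never essentially used.
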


\section{Algorithms} \label{sec:algo}
We consider algorithms for finding the empirical risk minimizer 
$\hat{f} = \arg\min_{f \in \conv(\setG)} R_n(f)$, when
$\setG$ consists of parametric basis models, i.e.,
$\setG = \{g_{\theta}: \theta \in \realnum^{p}\}$, where $g_{\theta}$
denotes a model with parameters $\theta$. 

\subsection{Greedy Algorithms}
The convexity of $\conv(\setG)$ allows the following greedy scheme.
Start with some $f_{0} \in \setH$.
At iteration $k$, we choose appropriate $\alpha_{t} \in [0,1]$ and 
$g_{t} \in \setG$, for the new convex combination
\begin{align}
	f_{t+1} = (1- \alpha_{t}) f_{t} + \alpha_{t} g_{t}.
\end{align}
We run the algorithm up to a maximum number of iterations $T$, or do
\emph{early stopping} if the improvements in the last few iterations is
negligible (less than a small threshold).

Such scheme generates sparse solutions in the sense that at iteration $t$, the
convex combination consists of at most $t$ basis functions, even though the
optimal combination can include arbitrarily large number of basis functions.

We present several instantiations of this scheme based on results from
functional optimization.
The derived sub-problems, while being functional optimization problems, are
equivalent to finite-dimensional numerical optimizations which can be solved
using stochastic gradient descent.
Some of them have interesting forms that differ from standard risk minimization
problems.

\SetKwFunction{RiskMinimizerOracle}{RiskMinimizerOracle}
\SetKwFunction{LinearMinimizerOracle}{LinearMinimizerOracle}

\medskip\noindent{\bf A nonlinear greedy algorithm}.
One natural way to choose $g_{t}$ and $\alpha_{t}$ is to choose them jointly so as
to maximize the decrease in the empirical risk
\citep{jones1992simple,lee1996efficient,mannor2003greedy,zhang2003sequential}.
Specifically,
\begin{align}
	\hspace{-1em}
	\theta_{t}, \alpha_{t} 
	\gets 
	\argmin_{\theta \in \realnum^{p}, \alpha \in [0, 1]}
		R_{n}((1 - \alpha) f_{t-1} + \alpha g_{\theta})
\end{align}
For common loss functions, the RHS is usually a differentiable function of
$\theta$ and $\alpha$, and thus the problem can be solved using first-order
methods.
We used Adam \citep{kingma2014adam} in our experiments.

When $L(y, f(x))$ is convex and smooth functional of $f$, it is known, e.g. from
\citep{zhang2003sequential}, that 
	$R_{n}(f_{t}) - R_{n}(\hat{f}) \le O(1/t)$.
In fact, we can still achieve a convergence rate of $O(1/t)$, as long as we can
solve the greedy step with an error of $O(1/t^{2})$, that is, if we can choose $g_{t}$
and $\alpha_{t}$ such that 
	$R_{n}((1 - \alpha_{t}) f_{t-1} + \alpha_{t} g_t)
	\le \min_{g \in \setG, \alpha \in [0, 1]} 
			R_{n}((1 - \alpha) f_{t-1} + \alpha g)
			+ \frac{c}{t^{2}}$,
for some constant $c > 0$ \citep{zhang2003sequential}.
In particular, this result applies to the quadratic loss. 

\medskip\noindent{\bf The FW algorithm}.
The FW algorithm \citep{frank1956algorithm} does not choose $g_{t}$ to directly
minimize the risk functional at each iteration, but chooses it by solving a
linear functional minimization problem
\begin{align}
	g_{t} &= \arg\min_{g \in \setG} \langle \funcgrad R_{n}(f_{t-1}), g \rangle.
	\label{eq:lmo}
\end{align}
where the step size $\alpha_{t}$ can be taken as 
$\alpha_{t} = \frac{1}{t+1}$ or chosen using line search, and 
$\funcgrad R_{n}(f)$ denotes the functional gradient of $R_{n}$ with
respect to $f$, which is only non-zero at the points in the sample and is thus
finite.
For the quadratic loss $L(y, f(x)) = (f(x) - y)^{2}$,
$g_{t}$ is $g_{\theta_{t}}$ with $\theta_{t}$ chosen by
\begin{align}
	\theta_{t} = \arg\min_{\theta \in \realnum^{p}} 
		\sum_{i} 2 (f_{t-1}(x_{i}) - y_{i}) g_{\theta}(x_{i}).
\end{align}
This optimization problem has an interesting form different from standard risk
minimization problems.
For quadratic loss, we can also derive the closed form solution for the
line-search for $\alpha_t$.
For other risk criteria, there is no closed form solution, and we treat that as
a parameter in the numerical optimization problem in each iteration.
FW also converges at an $O(1/t)$ rate as for smooth and convex functionals (e.g.,
see \citet{jaggi2013revisiting}), with the constant in the rate being of the
same order as that for the nonlinear greedy algorithm.

\medskip\noindent{\bf Away-step and Pairwise FW}.
The away-step Frank-Wolfe (AFW) \citep{guelat1986some}, and the pairwise
Frank-Wolfe (PFW) \citep{lacoste2015global} are faster variants which can
converge at a linear rate when the solution is not at the boundary.

AFW either takes a standard FW step or an \emph{away} step which removes a basis
network from current convex combination and redistributes the weight to
remaining basis networks. 
Specifically, at each iteration, it finds $g_{t} \in \setG$ that is most aligned
with the negative gradient $\funcgrad R_{n}(f_{t-1})$ as in the FW algorithm, 
and a basis function $a_{t}$ that is most misaligned with the negative gradient
$\funcgrad R_{n}(f_{t-1})$ from the set of basis functions in $f_{t-1}$.
Here, the inner product of two vectors measures the degree of
alignment between them.
It then constructs a FW direction $d_t^{\text{FW}}$ that moves towards $g_{t}$,
and an away-step direction $d_{t}^{\text{A}}$ that moves away from $a_{t}$.
The direction that is better aligned with the negative gradient is taken.
For the away-step, the step size is restricted to be in 
$[0, \frac{\alpha_{a_{t}}}{1 - \alpha_{a_{t}}}]$ so that the weight of $a_{t}$
remains non-negative in $f_{t}$.

PFW swaps the weight of $a_{t}$ and $g_{t}$
determined in AFW by moving along the direction $g_{t} - a_{t}$.
Line search is used to determine the optimal step size.

\subsection{A Non-greedy Algorithm}\label{sec:nongreedy}
If we know the number of basis models required a priori, we can
train the weights of the basis models and the convex coefficients
simultaneously.
Instead of using constrained optimization techniques, we can use a softmax
normalization.
However, we found this not working well in practice.

We propose a simple unconstrained parametrization of the convex coefficients
that have been observed to perform well in our experiments.
Specifically, if we know that the number of basis model is $k$, we reparameterize the convex coefficients 
$\alpha_{1}, \ldots, \alpha_{k}$ as a function of the unconstrained parameter vector 
$v \in \realnum^{k}$ with $\alpha_i = \frac{1/k + |v_i|}{1 + \sum_{i=1}^k | v_i |}$.
The model $\sum_{i} \alpha_{i} g_{\theta_{i}}(x)$ can be seen as a neural
network that can be trained conventionally.

\subsection{Implementation}
In practice, a lot of interesting functions are unbounded, and we may need
multiple output.

We can convert unbounded functions to bounded ones by using the scaled hard tanh
$\hardtanh(y) = B\max(-1, \min(y, 1))$
to clamp the output to $[-B, B]$.
Sometimes it is beneficial to choose the scaling factor $B$ to be larger than
the actual possible range: when $\setG$ contains the zero function,
$\setG$ is a larger set when $B$ is larger.

When multiple outputs are needed, we can train a model for each output separately.
If the convex hull contains the true input-output function for each output
channel, then the generalization theory for the case of
single output guarantees that we will learn all the input-output functions
eventually.

\section{Experiments} \label{sec:expt}

We choose $g_{\theta}$ as a small neural network in our experiments.
We compare the performance of the greedy algorithms (nicknamed as GCE, which
stands for greedy convex ensemble) in \Cref{sec:algo} to study whether it is
beneficial to use more sophisticated greedy algorithms.
We also compare the greedy algorithms with XGBoost (XGB) and Random Forest (RF)
to study how the convex combination constructed, which can be seen as a weighted
ensemble, fare.
Both XGB and RF provide strong baselines and are state-of-the-art
ensemble methods that won many Kaggle competitions for non-CV non-NLP tasks.
In addition, we also compare the greedy algorithms with the non-greedy method in
\Cref{sec:nongreedy} (NGCE), and a standard regularized neural network (NN).
Both NGCE and NN need to assume a given number of basis functions.
Comparison with NN sheds light on how the regularization effect of learning from
a convex hull compare with standard $\ell_{2}$ regularization.

We used 12 datasets of various sizes and tasks (\#instances and dimensions in brackets): 
\emph{diabetes (442/10), boston (506/13), ca\_housing (20,640/8), msd
(515,345/90)} for regression; 
\emph{iris (150/4), wine (178/13), breast\_cancer (569/30), digits (1,797/64), 
cifar10\_f (60,000/342), mnist (70,000/784), covertype (581,012/54), kddcup99
(4,898,431/41)} for classification. 
Most of the datasets are from UCI ML Repository \citep{uci}. 
All data attributes are normalized to have zero mean and unit variance.  
Further details on the datasets, their training/validation/test splits,
hyper-parameter tuning and experimental setup are available 
\fullversion{in the appendix}
{at \url{https://github.com/tan1889/gce}}.

\subsection{Comparison of GCE Algorithms} \label{subsec:compare_algs}
\Cref{fig:algs} shows the training and test error (MSE) on the \emph{msd} dataset
for the four greedy algorithms discussed in \Cref{sec:algo}.
For each variant, we train $100$ modules, with each being a single neuron of the
form $B \tanh(u^T x)$.
Interestingly, the non-linear greedy variant, which is most commonly studied, is
significantly slower than other variants.
The PFW variant has the best performance. 
We observed similar behavior of the algorithms on other datasets and settings,
thus we only report the results for the PFW variant in subsequent experiments.

\begin{figure}
\begin{center}
    \includegraphics[width=\columnwidth]{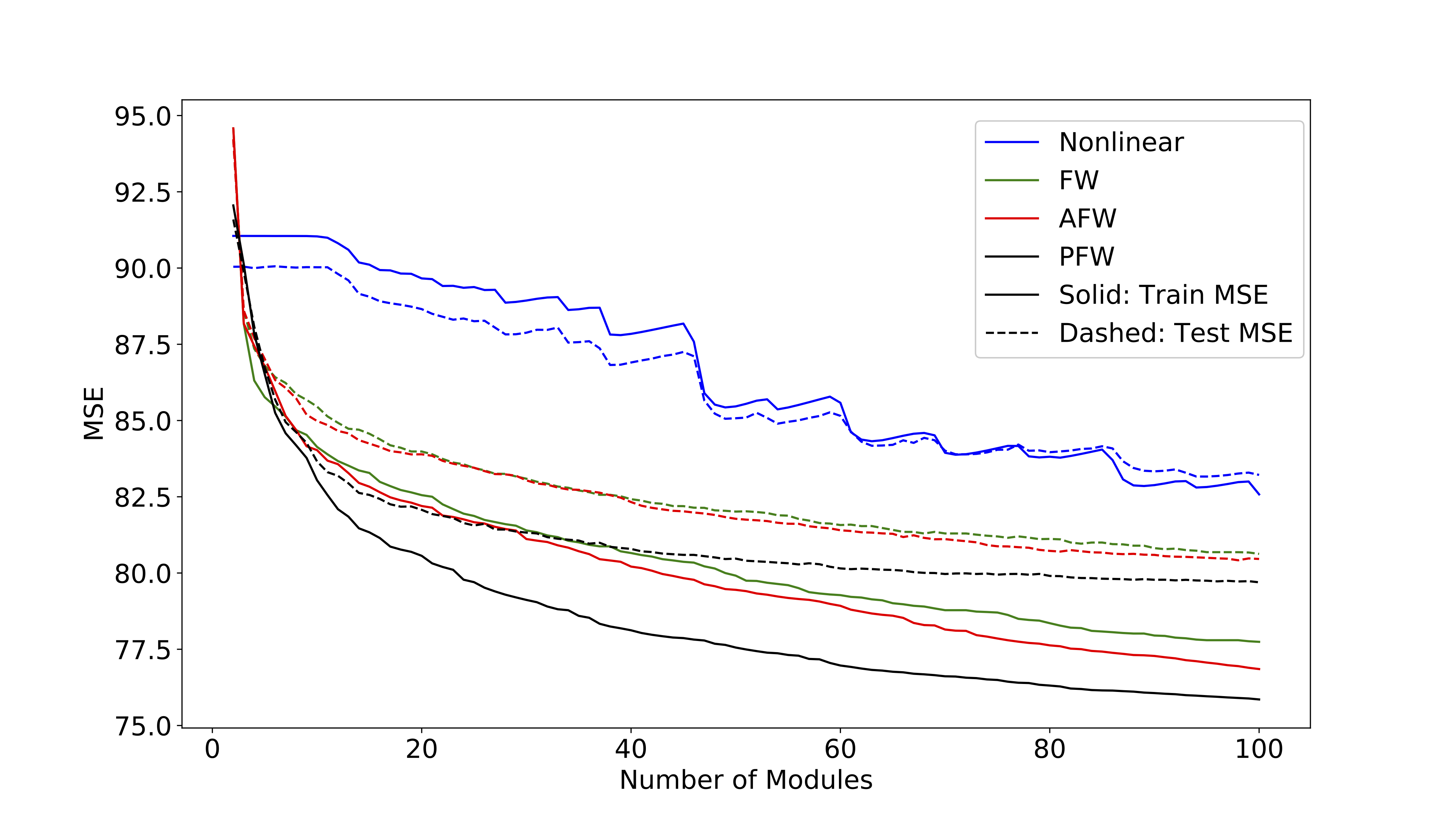}
    \caption{\label{fig:algs} \emph{Performance of different variants of the greedy algorithm on msd dataset.} 
		Mean squared error against number of modules added to the convex combination. 
		Solid and dashed curves indicate training and test errors respectively.}
\end{center}
\end{figure}

\subsection{Comparison of GCE with Other Algorithms}

We compare GCE (using PFW to greedily choose new basis functions), NGCE,
XGB, RF, and NN below.

\medskip\noindent{\bf Experimental setup}.
To ensure a fair comparison between algorithms, we spent a significant effort to tune hyper-parameters of competing algorithms. Particularly, XGB and RF are tuned over 2000 hyper-parameters combinations for small datasets (has less than 10000 training samples), and over 200 combinations for large datasets. 

The basis module for GCE is a two-layer network with 1 or 10 hidden neurons for small datasets  and 100 hidden neurons for other datasets. GCE grows the network by adding one basis module at a time until no improvement on validation set is detected, or until reaching the maximum limit of 100 modules. NGCE and NN are given the maximum capacity achievable by GCE. For NGCE, it is 100 modules, each of 10/100 hidden neurons for small/large datasets. NN is a two layers neural net of 1000/10000 hidden neurons for small/large datasets, respectively. NGCE and NN are tuned using grid search over \textit{learning\_rate} $\in \{0.01, 0.001\}$, \textit{regularization} $\in \{0, 10^{-6}, \dots, 10^{-1} \}$, totaling in 14 combinations. GCE uses a fixed set of hyper-parameters without tuning: \textit{learning\_rate}$=0.001$, \textit{regularization}$=0$. All these three algorithms use ReLU activation, MSE criterion for training regression problem, cross entropy loss for classification. The training uses Adam SGD with \textit{learning\_rate} reduced by 10 on plateau (training performance did not improve for 10 consecutive epochs) until reaching the minimum \textit{learning\_rate} of $10^{-5}$, at which point the optimizer is ran for another 10 epochs and then returns the  solution with the best performance across all training epochs.

\medskip\noindent{\bf Results and Discussion}.
For each algorithm, the model using the hyper-parameters with the best validation performance is selected.
Its test set performance is reported in \Cref{tbl:errors}. 
We ran the experiments on train/test splits fixed across all methods in
comparison, instead of using multiple train/test splits, which can be useful for
further comparison, but is beyond our computing resources.

\begin{table}[]
\resizebox{.87\linewidth}{!}{\parbox{\linewidth}{%
\centering
\begin{tabular}[t]{lrrrrrrr} 
\toprule
\textbf{Datasets} & \textbf{GCE} & \textbf{XGB} & \textbf{RF} & \textbf{NN} & \textbf{NGCE} \\
\midrule
diabetes & \textbf{42.706} & 46.569 & 49.519 & 43.283 & 44.703 \\
boston & \textbf{2.165} & 2.271 & 2.705 & 2.217 & 2.232 \\
ca\_housing & 0.435 & \textbf{0.393} & 0.416 & 0.440 & 0.437  \\
msd & \textbf{6.084} & 6.291 & 6.462 & 6.186 & 7.610 \\
\midrule
iris  & \textbf{0.00} & 6.67 & 6.67 & 3.33 & 10.00  \\
wine & \textbf{0.00} & 2.78 & 2.78 & \textbf{0.0} & \textbf{0.0}  \\
breast\_cancer & \textbf{3.51} & 4.39 & 8.77 & 3.51 & 4.39 \\
digits & 2.78 & 3.06 & \textbf{2.50} & 3.33 & 3.06 \\
cifar10\_f & \textbf{4.86} & 5.40 & 5.16 & 5.00 & 4.92 \\
mnist & 1.22 & 1.66 & 2.32 & 1.24 & \textbf{1.11} \\
covertype & 26.70 & 26.39 & 27.73 & 26.89 & \textbf{26.56} \\
kddcup99 & \textbf{0.01} & \textbf{0.01} & \textbf{0.01} & \textbf{0.01} & \textbf{0.01}\\
\bottomrule
\end{tabular}
}}
\caption{\label{tbl:errors} \emph{Summary of the empirical results.} 
	MAEs are reported for regression datasets (the first 4 lines), and
	misclassification rate (\%) are reported for classification datasets (last 8
	lines).
	XGBoost and RForest are tuned over more than 2000/200 hyper-parameters combinations for small/large datasets. NN and NGCE are tuned over 14 combinations. GCE grows the model by one basis module at a time and stops when no improvement is detected.}
\end{table}

From \Cref{tbl:errors}, GCE is competitive with the baselines on datasets with
different numbers of features and examples.
For regression problems, GCE uses up the maximum number of basis
functions, while for classification problems, GCE often terminates way earlier than
that, suggesting that it is capable to adapt to the complexity of the task.

We have chosen datasets from small to large in our experiments.
The small datasets can be easily overfitted, and are useful for comparing how
well the algorithms avoid overfitting.
Empirical results show that GCE builds up the convex ensemble to just a right
capacity, but not more. 
Thus, it has good generalization performance despite having almost little
parameter tuning, even for the smaller datasets, where overfitting could easily
occur.
On the other extreme, overfitting a very large datasets, like \textit{kddcup99},
is hard. So, empirical results for this dataset show that all models, including
ours, have adequate capacity, as they have similar generalization performance.
All algorithms were able to scale to the largest dataset (kddcup99), but GCE
usually requires little hyper-parameter tuning, while XGB and RF involve
significant tuning.

\noindent\emph{(a) Comparison of GCE against NN and NGCE}.
NN and NGCE have very similar performance as GCE on several datasets.
NN does not perform well on \emph{msd}, \emph{iris}, \emph{digits},
and \emph{cifar10\_f}, and NGCE does not perform well on
\emph{diabetes}, \emph{msd}, \emph{iris}, and \emph{breast\_cancer}.
In particular, both NN and NGCE perform poorly on \emph{iris}.
We suspect that NN and NGCE may be more susceptible to local minimum, leading
to an underfitted model, and we examined the training and test losses for GCE, NN
and NGCE. 
For several datasets, the differences between the losses of the three
algorithms are small, but large differences show up on some datasets.
On \emph{diabetes} and \emph{msd}, both NGCE and NN seem to underfit, with
much larger training and test losses than those of GCE.
NGCE and NN seem to overfit \emph{boston}, with similar test 
test losses as GCE, but much smaller training loss. 
NN seems to overfit on \emph{iris} as well.

NGCE is slightly poorer than NN overall.
An unregularized NN usually does not perform well.
Since NGCE is trained without any additional regularization (such as
$\ell_{2}$ regularization), this suggests that the convex hull constraint has
similar regularization effect as a standard regularization, and the improved
performance of GCE may be due to greedy training with early stopping.

GCE often learns a smaller model as compared to NGCE and NN on classification
problems and does not know the number of components to use a priori.
On the other hand, both NGCE and NN requires a priori knowledge of the number
of basis functions to use, which is set to be the maximum number of components
used for GCE.
Finding the best size for a given problem is often hard. 

\medskip\noindent\emph{(b) Comparison of GCE against XGBoost and RandomForest}.
GCE shows clear performance advantage over XGBoost and RandomForest on most
domains, except that 
XGBoost has a clear advantage on \emph{ca\_housing}, and RandomForest performing
slightly better on \emph{digits}.

While RandomForest and XGBoost have quite a few parameters to tune, and a proper
tuning often requires searching a large number of hyper-parameters, GCE works
well across datasets with a default setting for basis module optimization (no
tuning) and two options for module size.
Overall, GCE is often more efficient than RandomForest and XGBoost as there is
little tuning needed.
In addition, for large datasets, RandomForest and XGBoost are slow due to the
lack of mechanism for mini-batch training and no GPU speed-up. 

\section{Conclusion} \label{sec:conclude}

This paper presents some new results and empirical insights for learning an
optimal convex combination of basis models.
While we focused on the case with neural networks as the basis models, the
greedy algorithms in \Cref{sec:algo} can be applied with trees as basis models.
In the case of the nonlinear greedy algorithm and a quadratic loss, the greedy
step involves training a regression tree given a fixed step size.
For FW variants and other losses, the objective function at each greedy step
no longer corresponds to a standard loss function though.
Regarding generalization, since trees are nonparametric, our generalization
results do not hold, and we may face similar challenges as analyzing random
forests. 
There are interesting questions for further work.

\newpage
\bibliographystyle{named}
\bibliography{ref}

\fullversion{
\clearpage
\appendix
\section{Proofs}
For a real-valued function $r$ defined on a set $\setZ$, and a random variable
$Z$ taking values from $\setZ$, we use 
$\empproc r$ to denote $\E r(Z)$, and 
$\empproc_{n} r$ to denote $\E_{n} r(Z)$.

\begin{lem} \label{lem1} (\citet{koltchinskii2001rademacher}, Lemma 2.5)
	Let $\setR$ be a class of real-valued functions defined on $\setZ$, then
	\begin{align*}
		\E \sup_{r \in \setR} |\empproc r - \empproc_{n} r| \le 2\E \sup_{r \in \setR} \rproc_{n} r.
	\end{align*}
\end{lem}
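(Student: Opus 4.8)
The plan is to prove the inequality by the classical symmetrization argument with a ghost sample. First I would introduce an independent copy $Z_1', \ldots, Z_n'$ of the data, drawn from the same distribution and independent of $Z_1, \ldots, Z_n$, and write $\empproc_n' r = \frac1n\sum_i r(Z_i')$ for the corresponding empirical average. The starting observation is that $\empproc r = \E[\empproc_n' r]$, where the expectation is over the ghost sample, so that for every $r$ one has $\empproc r - \empproc_n r = \E'[\empproc_n' r - \empproc_n r]$, with $\E'$ denoting expectation over $Z_1',\ldots,Z_n'$ only. This replaces the inaccessible population quantity $\empproc r$ by an average over a second sample.

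Next I would pull the supremum and the absolute value inside the ghost expectation. By Jensen's inequality $|\E'[\empproc_n' r - \empproc_n r]| \le \E'|\empproc_n' r - \empproc_n r|$ for each fixed $r$, and since the supremum of an expectation is at most the expectation of the supremum, $\sup_r|\empproc r - \empproc_n r| \le \E'\sup_r|\empproc_n' r - \empproc_n r|$. Taking the outer expectation over $Z_1,\ldots,Z_n$ gives $\E\sup_r|\empproc r - \empproc_n r| \le \E\sup_r|\frac1n\sum_i(r(Z_i') - r(Z_i))|$, an expression involving only the two samples and no population term.

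The symmetrization step then introduces the Rademacher signs. Because $Z_i$ and $Z_i'$ are i.i.d., each difference $r(Z_i') - r(Z_i)$ has a symmetric distribution, and the pairs are independent across $i$; hence multiplying the $i$-th difference by an independent Rademacher sign $\epsilon_i$ leaves the joint law of the whole collection $(r(Z_i') - r(Z_i))_i$, viewed as a process indexed by $r$, unchanged. Consequently $\E\sup_r|\frac1n\sum_i(r(Z_i')-r(Z_i))| = \E\sup_r|\frac1n\sum_i\epsilon_i(r(Z_i')-r(Z_i))|$. Applying the triangle inequality and subadditivity of the supremum splits this into two terms, each equal to $\E\sup_r|\rproc_n r|$, which produces the factor $2$. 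Dropping the absolute value inside the supremum to reach exactly $2\E\sup_r\rproc_n r$ uses that the relevant class is symmetric (closed under negation), for which $\sup_r|\rproc_n r| = \sup_r\rproc_n r$.

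The step I expect to be the main obstacle is the sign-swap symmetrization, since the supremum over $r$ couples all coordinates and so cannot be argued coordinatewise; I must justify that inserting the signs preserves the distribution of the entire $r$-indexed process simultaneously, not merely of each marginal. The clean route is to condition on $(Z_i, Z_i')_i$, observe that swapping $Z_i \leftrightarrow Z_i'$ (equivalently flipping the sign of the $i$-th difference) is a measure-preserving transformation of the sample space, and then average over all $2^n$ sign patterns, which is exactly the Rademacher expectation. A secondary point to handle carefully is the passage from the absolute-value form $2\E\sup_r|\rproc_n r|$ to the stated one-sided form, which I would settle via the symmetry of the class as above, consistent with the convention of the cited source.
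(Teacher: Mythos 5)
The paper offers no proof of this lemma at all: it is quoted, with attribution, as Lemma 2.5 of Koltchinskii (2001), and the proof in that source is precisely the ghost-sample symmetrization you describe, so your route coincides with the one behind the citation. Your execution is sound: the identity $\empproc r = \E'[\empproc_n' r]$, the Jensen step, the exchange of supremum and ghost expectation, and the sign-swap justified by conditioning on $(Z_i, Z_i')_i$ and averaging over the $2^n$ measure-preserving swaps are all correct, and the triangle inequality then gives $\E \sup_{r \in \setR} |\empproc r - \empproc_n r| \le 2\,\E \sup_{r \in \setR} |\rproc_n r|$ with the absolute value on the right.

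Your caveat about the last step is not pedantry but a genuine catch: as the lemma is transcribed here --- no absolute value on the right, no hypothesis on $\setR$ --- the statement is false in general. Take $\setR = \{r\}$ a singleton with $r$ nonconstant and $Z$ nondegenerate: the left side is of order $n^{-1/2}$, while $\E\, \rproc_n r = 0$ because the $\epsilon_i$ are centered and independent of the sample. The cited result is stated with the sup-norm $\sup_r |\rproc_n r|$ on the right (equivalently, your symmetric-class reduction), and the regret class $\setR = \{r_f : f \in \conv(\setG)\}$ used downstream in Theorem 2 is \emph{not} closed under negation, so the absolute value cannot simply be dropped there either. Two repairs are available: keep $\E\sup_r|\rproc_n r|$ on the right, which the subsequent Dudley entropy bound controls just as well up to an additive $O(B/\sqrt{n})$ term for anchoring at a fixed function; or prove only the one-sided inequality $\E\sup_r(\empproc r - \empproc_n r) \le 2\,\E\sup_r \rproc_n r$, which holds with no symmetry assumption (omit the inner absolute value before swapping signs, and use $\sup$ subadditivity plus $\E\sup_r(-\rproc_n r) = \E\sup_r \rproc_n r$ by the symmetry of the $\epsilon_i$ themselves) and suffices for the high-probability upper bound on $\empproc r$ that Theorem 2 actually uses. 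Either way, your proof is correct where the transcribed statement is correct, and you identified exactly where it is not.
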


\begin{lem} \label{lem2} (\citet{mendelson2003few}, Theorem 2.25)
	\begin{itemize}
		\item[(a)]
			If $\phi: \realnum \to \realnum$ is $c_{\phi}$-Lipschitz and $\phi(0)
			= 0$, then 
			\begin{align*}
				\rproc_{n} \phi \circ r \le 2 c_{\phi} \rproc_{n} r.
			\end{align*}
		\item[(b)]
			If $g: \setZ \to [-M, M]$, then 
			\begin{align*}
				\E \sup_{r \in \setR} \rproc_{n} (r + g) 
				\le 
				\E \sup_{r \in \setR} \rproc_{n} r 
				+
				\frac{M}{\sqrt{n}}.
			\end{align*}
	\end{itemize}
\end{lem}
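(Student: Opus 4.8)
The plan is to read both inequalities with the implicit $\E\sup_{r\in\setR}$ that the Rademacher-complexity notation carries, so that part~(a) is a contraction (comparison) inequality for the Rademacher process $\rproc_{n}$ and part~(b) is a translation statement; I would dispatch the easy part~(b) first and reserve the effort for part~(a). For part~(b), since $g$ is the same for every $r$ and $\rproc_{n}$ is linear in its argument, the translate pulls straight out of the supremum: $\sup_{r\in\setR}\rproc_{n}(r+g) = \rproc_{n}g + \sup_{r\in\setR}\rproc_{n}r$. Taking expectations, $\E\,\rproc_{n}g = \frac{1}{n}\sum_{i}\E[\epsilon_{i}g(z_{i})] = 0$ because the signs $\epsilon_{i}$ are mean-zero and independent of the sample, which already proves the bound with room to spare. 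To land on the stated $+M/\sqrt{n}$ without invoking the exact cancellation, I would instead use Jensen, $\E\,\rproc_{n}g \le \E|\rproc_{n}g| \le (\E[(\rproc_{n}g)^{2}])^{1/2}$, and evaluate $\E[(\rproc_{n}g)^{2}] = \frac{1}{n^{2}}\sum_{i}\E[g(z_{i})^{2}] \le M^{2}/n$ via $\E[\epsilon_{i}\epsilon_{j}]=\delta_{ij}$ and $|g|\le M$, so the square root gives $M/\sqrt{n}$.

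For part~(a), the main content, first I would reduce to the $1$-Lipschitz case by homogeneity: $\rproc_{n}$ is linear in its function argument, so proving the claim for $\phi/c_{\phi}$ and rescaling recovers the factor $c_{\phi}$. The engine is a replacement of $\phi$ one coordinate at a time. Conditioning on all signs but $\epsilon_{1}$ and collecting the remaining terms into an arbitrary functional $h(r)$, the single-coordinate claim is $\E_{\epsilon_{1}}\sup_{r}[\epsilon_{1}\phi(r(z_{1}))+h(r)] \le \E_{\epsilon_{1}}\sup_{r}[\epsilon_{1}r(z_{1})+h(r)]$. To prove it I would expand the $\epsilon_{1}$-average into two suprema, $2\E_{\epsilon_{1}}\sup_{r}[\epsilon_{1}\phi(r(z_{1}))+h(r)] = \sup_{r}[\phi(r(z_{1}))+h(r)] + \sup_{r'}[-\phi(r'(z_{1}))+h(r')]$, pick near-maximizers $r,r'$ for the two suprema, and apply $|\phi(a)-\phi(b)|\le|a-b|$ together with a case split on the sign of $r(z_{1})-r'(z_{1})$ to swap each $\phi$-term for the identity. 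Iterating this replacement across all $n$ coordinates strips $\phi$ entirely and yields the sharp comparison $\E\sup_{r\in\setR}\rproc_{n}(\phi\circ r)\le c_{\phi}\,\E\sup_{r\in\setR}\rproc_{n}r$.

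The hard part will be this one-coordinate exchange: the bookkeeping with the two suprema and their near-maximizers, and the sign case analysis, are where the real work lies. I would note that the cited inequality carries an extra factor $2$ and the hypothesis $\phi(0)=0$, which are exactly what is needed to pass from the one-sided comparison above to the two-sided (absolute-value) supremum used by \citet{mendelson2003few}: one applies the sharp bound to both $\phi$ and $-\phi$ and sums, and $\phi(0)=0$ prevents a spurious constant term (e.g.\ from an additive shift of $\phi$) from surviving the absolute value. Hence the argument above implies the stated form, and part~(a) may alternatively simply be invoked from \citet{mendelson2003few}.
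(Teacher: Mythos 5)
Your proposal is correct, but it necessarily takes a different route from the paper, because the paper does not prove this lemma at all: it is imported as a black box from \citet{mendelson2003few} (Theorem 2.25) and invoked in the proof of \Cref{thm:convbayes}. What you supply is essentially the classical Ledoux--Talagrand argument, and it checks out. For (b), linearity of $\rproc_{n}$ plus $\E[\epsilon_{i} g(z_{i})] = 0$ indeed gives the translation inequality with \emph{zero} slack under this paper's signed (no absolute value) convention for the Rademacher process, and your Cauchy--Schwarz computation $\E|\rproc_{n} g| \le \left(\frac{1}{n^{2}}\sum_{i}\E[g(z_{i})^{2}]\right)^{1/2} \le M/\sqrt{n}$ correctly recovers the stated $M/\sqrt{n}$, which is the cost that actually appears under Mendelson's absolute-value convention. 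For (a), your one-coordinate exchange --- splitting the $\epsilon_{1}$-average into two suprema, taking near-maximizers, and the sign case split via $|\phi(a)-\phi(b)| \le c_{\phi}|a-b|$ --- is exactly the standard contraction induction, and it yields the sharp constant $c_{\phi}$ for the signed process, strictly stronger than the stated $2c_{\phi}$. One caveat: your closing remark that the factor $2$ for the absolute-value version follows by ``applying the bound to $\phi$ and $-\phi$ and summing'' is looser than the genuine argument (one must control positive parts using $\phi(0)=0$ and the symmetry of the signs, since $\sup_{r\in\setR}$ of the signed sum need not be nonnegative), but this does not affect the paper: part (a) is only ever applied to the signed $\E\sup_{r\in\setR}$, where your sharp one-sided bound suffices --- and there even $\phi(0)=0$ is dispensable, since an additive offset $c$ contributes $\frac{c}{n}\sum_{i}\epsilon_{i}$, which is independent of $r$ and has mean zero. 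In short, the citation keeps the paper brief, while your proof makes the appendix self-contained and shows the stated constants are not tight for the use the paper makes of them.
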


\thmconvbayes*
\begin{proof}
	For any function $f$, define its regret version $r_{f}$ by 
	\begin{align*}
		r_{f}(x, y) = L(y, f(x)) - L(y, f^{*}(x)).
	\end{align*}
	We call $\setR = \{r_{f}: f \in \setH\}$ the regret class of $\setH$.
	
	For any i.i.d. sample $\bz = ((x_{1}, y_{1}), \ldots, (x_{n}, y_{n}))$, we
	first show that  
	$h(\bz) = \sup_{r \in \setR} |\empproc r - \empproc_{n} r|$ is concentrated
	around its expectation.
	As a first step,	we verify that $h(\bz)$ satisfies the bounded difference
	property with a bound $4 c_{\phi} B/n$, as follows.
	Let $\bz'$ be the same as $\bz$ except that $(x_{i}, y_{i})$ is replaced by an
	arbitrary 
	$(x'_{i}, y'_{i}) \in \setX \times \setY$.
	Since $\phi$ is $c_{\phi}$-Lipschitz and $\phi(0) = 0$, we have
	\begin{align*}
		L(y, f(x)) 
		&= \phi(f(x) - y)  - \phi(0) \\
		&\le c_{\phi} |f(x) - y - 0| \\
		&\le 2 c_{\phi} B,
	\end{align*}
	thus we have 
	\begin{align*}
		&\quad \left|h(\bz) - h(\bz')\right| \\
		&= \left|
			\sup_{r \in \setR} |\empproc r - \empproc_{n} r| 
			- 
			\sup_{r \in \setR} |\empproc r - \empproc_{n}' r| 
			\right|\\
		&\le \sup_{r \in \setR} |\empproc_{n} r - \empproc_{n}' r| \\
		&= \sup_{f \in \setH} \frac{1}{n}
			\bigg|
			\left(L(y_{i}, f(x_{i})) - L(y_{i}, f^{*}(x_{i}))\right) \\
			&\qquad -\left(L(y_{i}', f(x_{i}')) - L(y_{i}, f^{*}(x_{i}))\right)
			\bigg| \\
		&\le \frac{4 c_{\phi} B}{n}.
	\end{align*}
	Hence $h(\bz)$ satisfies the bounded difference property with a bound 
	$4 c_{\phi} B/n$.
	By McDiarmid's inequality, with probability at least $1 - \delta$,
	\begin{align*}
		\sup_{r \in \setR} |\empproc r - \empproc_{n}r| 
		\le \E \sup_{r \in \setR} |\empproc r - \empproc_{n}r|
		+ \frac{\sqrt{8 c_{\phi}^{2} B^{2} \ln(1/\delta)}}{\sqrt{n}}. 
		\tag{I}
	\end{align*}
	By \Cref{lem1}, we have
	\begin{align*}
		\E \sup_{r \in \setR} |\empproc r - \empproc_{n} r| \le 2\E \sup_{r \in \setR} \rproc_{n} r.
		\tag{II}
	\end{align*}

	The Rademacher complexity can be bounded as follows
	\begin{align*}
		&\qquad \E \sup_{r \in \setR} \rproc_{n} r \\
		&= \E \sup_{f \in \conv(\setG)} \rproc_{n} r_{f} \\
		&= \E \sup_{f \in \conv(\setG)} \rproc_{n} (L(\cdot, f(\cdot)) - L(\cdot, f^{*}(\cdot)))\\
		&\le \E \sup_{f \in \conv(\setG)} \rproc_{n} \phi(f(\cdot) - \cdot)
			+ \frac{2c_{\phi}B}{\sqrt{n}} \\
		&\le 2 c_{\phi} \E \sup_{f \in \conv(\setG)} \rproc_{n} (f(\cdot) - \cdot)
			+ \frac{2c_{\phi}B}{\sqrt{n}} \\
		&\le 2 c_{\phi} \E \sup_{f \in \conv(\setG)} \rproc_{n} f
			+ \frac{2B c_{\phi}}{\sqrt{n}} 
			+ \frac{2c_{\phi}B}{\sqrt{n}} \\
		&= 2 c_{\phi} \E \sup_{f \in \conv(\setG)} \rproc_{n} f
			+ \frac{4c_{\phi}B}{\sqrt{n}},
	\end{align*}
	where 1st and 3rd inequalities follow from \Cref{lem2} (b), and
	2nd inequality from \Cref{lem2} (a).
	
	We bound the Rademacher complexity $\E \sup_{f \in G} \rproc_{n} f$ using a
	covering number argument, as follows.
	For any sequence $\bx = (x_{1}, \ldots, x_{n}) \in \setX^{n}$, any 
	$f, g \in \setH$, let
	$d_{\bx}(f, g) = \left(\frac{1}{n} \sum_{i} (f(x_{i}) - g(x_{i}))^{2}\right)^{1/2}$.
	Since $d_{P}(\setG) = p$, for any $\bx$,
	\begin{align}
		N(\epsilon, \setG, d_{\bx}) \le \left(\frac{C}{\epsilon}\right)^{2p},
	\end{align}
	where $C > 0$ is an absolute constant.
	Using Dudley's entropy integral bound, we have 
	\resizebox{.97\linewidth}{!}{\parbox{\linewidth}{%
	\begin{align*}
		\E \sup_{f \in G} \rproc_{n} f 
		\le C'B \E_{\bz} \int_{0}^{\infty} \sqrt{\frac{\ln N(\epsilon, G, d_{\bx})}{n}} d \epsilon
		= D B\sqrt{p/n}. \tag{IV}
	\end{align*}
	}}
	where $C'$ and $D$ are absolute constants.
	
	Combining the inequalities (I)-(IV), we have with probability at least $1 -
	\delta$, for any $r \in \setR$, we have
	\begin{multline*}
		\empproc r
		\le \empproc_{n} r + \sup_{r' \in \setR} |\empproc r' - \empproc_{n} r'|  \\
		\le \empproc_{n} r + 
			\frac{2 c_{\phi} B \left(\sqrt{2 \ln(1/\delta)} + D\sqrt{p} + 2\right)}
			{\sqrt{n}}.
	\end{multline*}
\end{proof}


\thmconvclass*
\noindent\emph{Proof sketch.}
By inspecting the proof of \Cref{thm:convclass}, we can see that the proof still
works if we replace $f^{*}$ by an arbitrary function, including $h^{*}$.
Thus with probability at least $1-\delta$, we have
\begin{align*}
	R(\hat{f}) - R(h^{*})
	\le 
	R_{n}(\hat{f}) - R_{n}(h^{*}) + \frac{c}{\sqrt{n}}
	\le 
	\frac{c}{\sqrt{n}}.
\end{align*}
\hfill$\square$

\corlp*
\begin{proof}
	We have $L(y, f(x)) = \phi(f(x) - y)$ where $\phi(u) = |u|^{q}$.
	$\phi(u)$ has a Lipschitz constant of $q (2B)^{q-1}$ in our case, because 
	we only consider $u$ of the form $f(x) - y$, and thus $u \in [-2B, 2B]$, and
	for any $u, v \in [-2B, 2B]$, we have
	$|\phi(u) - \phi(v)| = |q c^{q-1}  (|u| - |v|)| \le q (2B)^{q-1} |u - v|$,
	where $c$ is between $|u|$ and $|v$.
	The first equation is obtained by applying the mean value theorem to the
	function $u^{p}$, and taking absolute values on both sides.
	The second inequality follows because both $|u|$ and $|v|$ are not more than
	$2B$, and $||u| - |v|| \le |u - v|$.
\end{proof}

\corcls*
\noindent\emph{Proof sketch.}
First note that the loss $L$ is bounded by $c_{\phi} B$ because 
$\phi$ is Lipschitz, $\phi(0) = 0$, and the margin $y f(x)$ is in $[-B, B]$.
Thus $L$ is also bounded by $2 c_{\phi} B$.
This allows the McDiarmid's inequality in the proof of \Cref{thm:convbayes} to go
through.
Secondly, Eq. (*) in the proof of \Cref{thm:convbayes} becomes 
\begin{align*}
	\E \sup_{f \in \conv(\setG)} \rproc_{n} \phi(\cdot f(\cdot)) + \frac{2c_{\phi}B}{\sqrt{n}},
\end{align*}
where the first $\cdot$ is the output $y$, and the second $\cdot$ is the input
$x$.
Now we can apply the Lipschitz property of $\phi$ to bound the first term by
$2 c_{\phi} \E \sup_{f \in \conv(\setG)} \rproc_{n} \cdot f(\cdot)$.
This is equal to 
$2 c_{\phi} \E \sup_{f \in \conv(\setG)} \rproc_{n} f(\cdot)$, because 
$y \in \{-1, 1\}$.
Thus the bound (III) still holds.
The remaining steps go through without any modification.
\hfill$\square$

\corlog*
\begin{proof}
	Consider the loss 
	$L'(y, f(x) = -\ln \frac{2}{1 + e^{-y f(x)}}$.
	Then $L'(y, f(x)) =  L(y, f(x)) - \ln 2$, that is, $L$ and $L'$ differ by only
	a constant, and thus it is sufficient to show that the bound holds for $L'$.
	The modified loss $L'(y, f(x))$ has the form $\phi(y f(x))$ where 
	$\phi(u) = -\ln \frac{2}{1 + e^{-u}}$.
	We have $\phi(0) = 0$.
	In addition, $\phi$ is 1-Lipschitz because the absolute value of its
	derivative is $|\phi'(u)|  = |\frac{-e^{-u}}{1 + e^{-u}}| \le 1$.
	Hence $L'$ satisfies the condition in \Cref{cor:cls}, and the $O(1/\sqrt{n})$
	bound there holds for $L'$.
\end{proof}

\section{Datasets and Experimental Setup}\label{apdx:experiments}

We used 12 datasets of various sizes and tasks (both regression and classification) in the experiments. Most of the datasets are from UCI ML Repository \citep{uci}. Table \ref{tbl:datasets} provides a summary of the datasets. 

\begin{table}[h!]
\resizebox{.7\linewidth}{!}{\parbox{\linewidth}{%
\begin{tabular}[t]{llrrrrrr} 
\toprule
\textbf{datasets} & \textbf{task} & \#\textbf{dim} & \#\textbf{samples} & \#\textbf{train} & \#\textbf{valdtn} & \#\textbf{test} \\
\midrule
diabetes & regr & 10 & 442 & 282 & 71 & 89 \\
boston & regr & 13 & 506 & 323 & 81 & 102 \\
ca\_housing & regr & 8 & 20,640 & 13,209 & 3,303 & 4,128 \\
msd & regr & 90 & 515,345 & $^\dagger$296,777 & 74,195 & $^\dagger$144,373 \\
iris & class & 4 & 150 & 96 & 24 & 30 \\
wine & class & 13 & 178 & 113 & 29 & 36 \\
breast\_cancer & class & 30 & 569 & 364 & 91 & 114 \\
digits & class & 64 & 1,797 & 1,149 & 288 & 360 \\
cifar10\_f & class & 342 & 60,000 & $^\dagger$32,000 & 8,000 & $^\dagger$20,000 \\
mnist & class & 784 & 70,000 & $^\dagger$38,400 & 9,600 & $^\dagger$22,000 \\
covertype & class & 54 & 581,012 & $^\dagger$11,340 & $^\dagger$2,268& $^\dagger$569,672 \\
kddcup99 & class & 41 & 4,898,431 & $^\dagger$2,645,152 & 661,288 & $^\dagger$1,591,991 \\
\bottomrule
\end{tabular}
}}
\caption{\label{tbl:datasets} \emph{Summary of datasets used in the experiments.} $^\ddagger$: number of classes for classification and $\max |output|$ for regression. $^\dagger$: a predefined set chosen by the author of the respective dataset. If not predefined, training/test set is a random split of 80\%/20\% of the dataset. Then, the training set is again split into 80\% for training and 20\% for validation. \emph{boston, diabetes, iris, digits, wine, breast\_cancer, covertype, kddcup99, ca\_housing} are loaded using data utility in scikit-learn package. (Year prediction) \emph{msd} is taken directly from UCI's website. \emph{mnist} and \emph{cifar10} are well known datasets and are loaded using PyTorch data utilities. cifar10\_f is created by transforming cifar10 using a trained DenseNet (95.16\% accuracy on test set) and taking the values of 342 features in the last convolutional layer. Training/validation/test set remain identical across all experiments. All data attributes are normalized to have 0-mean 1-standard deviation. }
\end{table}

To ensure a fair comparison between algorithms, we spent a significant effort to tune the hyper-parameters of competing algorithms. For each algorithm, the model with the best validation performance is selected and the performance on test set is reported. Small datasets (less than 10000 training samples, i.e. \emph{boston, diabetes, iris, digits, wine, breast\_cancer}), allow for a larger number of hyper-parameter tuning combinations. The details of hyper-parameter tuning for each algorithm is as follows:

\begin{itemize}

	\item XGBoost: Evaluation metric \textit{eval\_metric=merror} for classification and \textit{eval\_metric=rmse} for regression. \textit{num\_boost\_round=2000, early\_stopping\_rounds=50}. For small datasets, we start with \textit{eta=0.1} and do 100 random searches over the two most important hyper-parameters in the following ranges: \textit{max\_depth} $\in \{3, 5, 7, 9, 11, 13\}$, \textit{min\_child\_weight} $\in \{0, 2, 4, 6, 8, 10, 12, 14, 16, 18\}$. The random search is followed by a $5\times5$ fine tuning grid search around the best value for each parameters. Next, we tune \textit{gamma} $\in \{0.0, 0.1, 0.2, 0.3, 0.4, 0.5\}$, followed by a grid search for \textit{subsample} $\in \{0.5, 0.6, 0.7, 0.8, 0.9\}$ and \textit{colsample\_bytree} $\in \{0.5, 0.6, 0.7, 0.8, 0.9\}$, followed by another grid search for \textit{reg\_lambda} $\in \{0, 0.0001, 0.001, 0.01, 0.1, 1, 10, 100\}$ and \textit{reg\_alpha} $\in \{0, 0.1, 1\}$. Next, we tune the learning rate \textit{eta} $\in \{0.2, 0.15, 0.05, 0.01, 0.005, 0.001\}$. Then finally we do a 1000 random searches in the neighborhood of the best value for all parameters. This process generates in total about 2298 combinations of hyper-parameters settings. For large datasets, the procedure is similar, with more restrictive range of values for secondary parameters: \textit{gamma} $\in \{0, 0.05, 0.1, 0.2\}$, \textit{subsample} $\in \{0.5, 0.6, 0.7, 0.8\}$, \textit{colsample\_bytree} $\in \{0.5, 0.6, 0.7, 0.8\}$, \textit{reg\_lambda} $\in \{0, 0.5, 1, 1.5\}$, \textit{eta} $\in \{0.15, 0.05, 0.01\}$, which are also optimized separately instead of jointly in pairs as before. 

	\item RandomForest: Training criterion is $gini$ for classification and $mse$ for regression. The maximum number of trees is 2000, with early stoping if there is no improvement over 50 additional trees. For Random Forest, we found that a large number of random searches is often the most effective strategy. So, we do 4000/200 random searches for small/large datasets respectively. The hyper-parameters and their range are as follows.\\ 
\textit{max\_features} $\in \{auto, sqrt, log2, 1, 3, 5, 0.1, 0.2, 0.3, \\0.4, 0.5, 0.6, 0.7\}$,\\ \textit{min\_samples\_leaf} $\in \{1, 2, 5, 10, 20, 30, 50, 80, 120, 170,\\230\}$,\\ 
\textit{max\_depth} $\in \{1, 2, 5, 10, 20, 30, 50, 80, 120\}$, \\ 
\textit{min\_samples\_split} $\in \{2, 4, 6, 8, 10, 12, 14, 16\}$,\\ 
\textit{no\_bootstrap} $\in \{True, False \}$.

	\item GCE, NGCE, NN: The basis module for GCE is a two layers network with 1 or 10 hidden neurons for small datasets  and 100 hidden neurons for other datasets. The output of the basis module is bounded using the $hardtanh$ function scaled by the bound $B$. For classification we set $B=10$, for regression, $B=\frac{4}{3} \max |output|$. GCE grows the network by adding one basis module at a time until no improvement on validation set is detected, or until reaching the maximum limit of 100 modules. We use Brent's method as line search for parameter $\alpha_t$ of GCE. NGCE and NN are given the maximum capacity achievable by GCE. For NGCE, it is 100 modules, each of 10/100 hidden neurons for small/large datasets. NN is a two layers neural net of 1000/10000 hidden neurons for small/large datasets, respectively. For both small and large datasets, NGCE and NN are tuned using grid search over $learning\_rate \in \{0.01, 0.001\}$, $regularization \in \{0, 10^{-6}, \dots, 10^{-1} \}$, totaling in 14 combinations. GCE uses a fixed set of hyper-parameters without tuning: $learning\_rate=0.001, regularization=0$. All these three algorithms use ReLU activation, MSE criterion for training regression problem, cross entropy loss for classification. The training uses Adam SGD with $learning\_rate$ reduced by 10 on plateau (training performance did not improve for 10 consecutive epochs) until reaching the minimum learning rate of $10^{-5}$, at which point the optimizer is ran for another 10 epochs and then returns the  solution with the best performance across all training epochs.

\end{itemize}

All experiments are implemented using Python and its interface for XGBoost.  Random Forest is from scikit-learn package. Greedy variants and NN are implemented using PyTorch \citep{paszke2017automatic} and are ran on a machine with Intel i5-7600K CPU @ 3.80GHz (4 cores) and 1x NVIDIA GEFORCE GTX 1080 Ti GPU card. XGBoost and Random Forest are run on cloud machines with Intel CPU E5-2650 v3 @ 2.30GHz (8  cores) and one NVIDIA GEFORCE RTX 2080 Ti GPU.
}
{}

\end{document}